\documentclass[11pt]{article}

\usepackage{fullpage}
\usepackage[utf8]{inputenc}
\usepackage[T1]{fontenc}
\usepackage{microtype}
\usepackage{amsmath,amssymb,amsfonts,amsthm,mathtools}

\usepackage{xcolor}
\usepackage{tikz}
\usetikzlibrary{arrows.meta,positioning}
\usepackage{wrapfig}
\usepackage{dsfont}
\usepackage{enumitem}

\usepackage[authoryear,sort&compress]{natbib}
\usepackage{url}
\usepackage{hyperref}
\hypersetup{
    colorlinks=true,
    linkcolor=black,
    citecolor=black,
    urlcolor=black
}
\usepackage{cleveref}
\usepackage{authblk}
\usepackage{amartya_ltx}
\renewcommand{\epsilon}{\varepsilon}

\title{Why Forget-Only Unlearning Needs Memorization}
\author[1]{Luka Radi\'{c}}
\author[1]{Vikrant Singhal}
\author[1]{Amartya Sanyal}
\affil[1]{Department of Computer Science, University of Copenhagen}
\affil[ ]{\texttt{\{luka.radic, vikrant.singhal, amsa\}@di.ku.dk}}
\date{}

\begin{document}

\maketitle
\begin{abstract}
    Machine unlearning asks for a deletion algorithm whose
    output is close to retraining from scratch without the selected
    forget examples. In this work, we study forget-only unlearning,
    where the deletion algorithm receives only the trained model and
    the examples to forget, with no retained data or extra training
    information. We ask whether forget-only unlearning is always
    possible. We first show that this depends on the learning method:
    different datasets can produce the same trained model but require
    very different outputs after the same examples are removed. Using
    this observation, we derive lower bounds on how accurately
    unlearning can match retraining and instantiate it for several
    standard learning algorithms. We then ask what must be true when
    forget-only unlearning succeeds. To this end, we derive lower
    bounds on what an algorithm must memorize about the training data
    to handle arbitrary deletion requests. For simple threshold
    learners, the required information can be as large as the entire
    dataset, even though ordinary training keeps only one boundary
    point. Overall, our results show that information discarded during
    ordinary learning may be needed later for deletion, so models
    designed for forget-only unlearning may need to retain more
    information than standard training does.
\end{abstract}

\section{Introduction}\label{sec:intro}

As machine learning models are more widely deployed, the ability to
retroactively remove specific data from their training sets has become
important. This need arises from privacy regulations, such as the EU
GDPR's right to be forgotten~(\citet{gdpr2016}), and from the
accidental inclusion of poisoned~\citep{schoepf2024potion},
copyrighted~\citep{dou2025avoiding}, or unlawfully
collected~\citep{thiel2023identifying} data. While retraining a model
on the remaining data is the gold standard, doing so is often too
expensive or practically impossible. Certified machine
\emph{unlearning}, therefore, asks for a cheaper procedure whose
output is close in a distributional sense to the model obtained by
retraining from scratch on the retained data~\citep{cao2015towards}.

An unlearning scheme consists of a learning algorithm \(\learning\)
and an unlearning algorithm \(\unlearning\). Given a training dataset
\(S\), the learner outputs a model
\(
    M\coloneqq \learning(S).
\) 
The unlearner receives a forget set $U \subseteq S$ and the trained
model, and outputs the unlearned model
\(
    \overline{W} \coloneqq \unlearning(M, U),  
\)
with the goal of being distributionally close to the retraining target
$W \coloneqq \learning \br{S \setminus U}$.  In this paper, we study
the \emph{forget-only} setting: at deletion time, \(\unlearning\) has
access only to \(M\) and \(U\), with no retained data or other
training information except what is already stored in \(M\). This
raises a difficulty: information discarded during training could be
necessary to determine the correct retraining target after deletion.

We define unlearning using the notion of R\'enyi divergence. A pair
\((\learning,\unlearning)\) satisfies \((\alpha,\varepsilon)\)-R\'enyi
unlearning (RU)~(\Cref{def:renyi-unlearning}) if, for every $S$ and $U
\subseteq S$, it holds that
\(
d_\alpha\!\left(
\unlearning(M,U),\,
\unlearning(W,\emptyset)
\right)\leq\varepsilon,
\)
where \(d_\alpha\) denotes the symmetric \(\alpha\)-Rényi divergence.
We note that R\'enyi unlearning implies the standard
\((\epsilon',\delta)\)-unlearning for appropriate choice of
\(\epsilon'\). Similar to~\citet{sekhari2021RememberWhatYou}, we
compare the unlearned output with \(\unlearning(W,\emptyset)\) to
allow for deterministic learners. However, indistinguishability alone
does not guarantee utility: an unlearner could output the same
distribution regardless of its inputs and still satisfy RU. We
therefore also require the empty-request unlearning to preserve
utility by remaining \emph{close} to its input, with high probability (see~\Cref{def:gamma-utility}).

Several empirical unlearning methods are forget-only or close to
forget-only, including commonly used methods based on gradient ascent
on the forget set
\citep{zhang2024npo,fan2025simplicity,jang2023knowledge}. These
methods can work well in some settings, but fail in
others~\citep{mavrothalassitis2025ascent,yu2025impossibility}. This
leads to our first question:
\begin{question}
\label{q:forget-only-possible}
Is forget-only unlearning always possible? If not, what conditions
make it impossible?
\end{question}

Our first result in~\Cref{sec:impossibility} shows that forget-only
unlearning is not always possible. Intuitively, the obstruction arises
when many~(say two) datasets $S_1$, $S_2$ produce the same trained
model, $\learning(S_1) = \learning(S_2)$, but deleting the same forget
set $U$ pushes their corresponding retraining targets far apart. For
example, consider a threshold learner that returns the leftmost
positive point. Two datasets may share this point but have different
second-leftmost positive points, which become the respective
retraining targets when the shared point is deleted. A forget-only
unlearner receives the same input in both cases, namely
$(\learning(S_1),U) = (\learning(S_2),U)$ and must therefore produce
the same output distribution. However, that output cannot be close to
two far apart retraining targets if they are required to preserve
utility on empty unlearning requests. We generalize these ideas to
lower bound the unlearning parameter
$\varepsilon$ (\Cref{thm:concentration-impossibility}), and instantiate
it for several common learning algorithms
(\Cref{ex:thresholds,ex:median,ex:svm}). One consequence is the
following.

\begin{corollary}[Informal: hard-margin SVM]
Let \(\learning\) be hard-margin linear SVM in dimension \(d\geq 2\).
Then, no forget-only unlearner with non-trivial utility can satisfy
\((\alpha,\varepsilon)\)-R\'{e}nyi unlearning with a fixed finite
\(\varepsilon\) uniformly over all datasets and deletion requests.
\end{corollary}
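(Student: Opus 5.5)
We will use the collision argument sketched above for threshold learners, now for hard-margin SVM. The idea is to find two datasets $S_1,S_2$ and a common forget set $U\subseteq S_1\cap S_2$ with three properties. First, $\learning(S_1)=\learning(S_2)$, meaning the same max-margin hyperplane $(w,b)$. Second, the retraining targets $W_1=\learning(S_1\setminus U)$ and $W_2=\learning(S_2\setminus U)$ are as far apart as we like. Third, $U$ is the same for both. Since the unlearner sees identical inputs $(M,U)$ in both cases, $\unlearning(M,U)$ is one fixed distribution $P$. Rényi unlearning then gives $d_\alpha(P,\unlearning(W_i,\emptyset))\le\varepsilon$ for $i=1,2$. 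The utility requirement (\Cref{def:gamma-utility}) says $\unlearning(W_i,\emptyset)$ puts probability at least $1-\beta$ in a $\gamma$-ball around $W_i$. Once the distance between $W_1$ and $W_2$ exceeds $2\gamma$, these two balls are disjoint.

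To turn this into a bound on $\varepsilon$, we will invoke \Cref{thm:concentration-impossibility}. The principle behind it is the following. If two distributions concentrate on disjoint sets, each with mass at least $1-\beta$, then no single $P$ can be within Rényi divergence $\varepsilon$ of both. Concretely, the data-processing inequality applied to the indicator of one ball gives a lower bound on $\varepsilon$ of order $\log(1/\beta)$ (times a factor depending on $\alpha$). Given this theorem, the corollary reduces to building the SVM collision for every target separation.

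The construction, in $d=2$ (and embedded in higher $d$ by padding with zero coordinates), is as follows.
\begin{itemize}
\item Take the shared support vectors $U=\{(0,1),+1\}\cup\{(0,-1),-1\}$. These two points alone give the hyperplane $x_2=0$ with margin $1$.
\item Let $S_i=U\cup T_i$, where $T_i$ contains extra points strictly outside the margin slab $\{|x_2|\le 1\}$. Such points do not change the max-margin solution, so $\learning(S_1)=\learning(S_2)=(e_2,0)$. This uses the KKT characterization: non-support vectors with positive slack can be added without changing the solution.
\item For $T_1$, use the points $(0,R)$ labeled $+1$ and $(0,-R)$ labeled $-1$. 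After deleting $U$, the SVM is again $x_2=0$, now with margin $R$.
\item For $T_2$, use the rotated points $(R,R)$ labeled $+1$ and $(-R,-R)$ labeled $-1$. These lie outside the original slab when $R>1$. After deletion, the retrained separator is the perpendicular bisector with normal direction $(1,1)/\sqrt2$.
\end{itemize}
The two retrained hyperplanes differ in angle by $\pi/4$. To make the gap exceed any $\gamma$ in whatever metric the utility definition uses, we have two options: choose the rotation angle and scale freely, or use the unnormalized $w$, whose norm scales like $1/R$. If the metric is on normalized parameters, a fixed angle gap only beats $\gamma$ below some threshold. In that case we will instead translate the offset $b$, placing $T_2$ asymmetrically at $(0,R)$ and $(0,-R')$ so that $b_2$ is unbounded.

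The main obstacle is matching the construction to the exact metric and parametrization in \Cref{def:gamma-utility} and \Cref{ex:svm}. The goal is to make the retraining gap unbounded while keeping $\learning(S_1)=\learning(S_2)$ exact. The offset-translation family above is what we would try first, because $b$ can be pushed to infinity with margins still feasible. Taking $\beta\to0$ as well, or letting the gap grow, then forces the lower bound of \Cref{thm:concentration-impossibility} past any fixed finite $\varepsilon$. This rules out uniform RU.
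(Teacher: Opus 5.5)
You have the paper's overall strategy: an exact collision $\mathcal{A}(S_1)=\mathcal{A}(S_2)$, separated retraining targets after deleting a common $U$, then \Cref{thm:concentration-impossibility} with $N=2$ and $\kappa=\delta_\alpha=0$. Your collision itself is also fine: the two points of $U$ force the unique solution $w=e_2$, $b=0$, and the constraints from $T_1,T_2$ hold there once $R\ge1$. The gap is in the step the corollary rests on, making the post-deletion separation $\Delta$ \emph{unbounded}. You flag this as the main obstacle, but neither of your fixes works.

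\begin{itemize}
\item \emph{Rotated family.} The retrained models are $W_1=(0,1/R)$ and $W_2=\tfrac{1}{2R}(1,1)$, both with $b=0$. So $\|W_1-W_2\|_2=\tfrac{1}{\sqrt2\,R}\le\tfrac{1}{\sqrt2}$. Since the collision forces $R\ge1$, the fact that the norm ``scales like $1/R$'' makes the gap shrink, not grow.
\item \emph{Offset family.} The retrained SVM on $\{((0,R),+1),((0,-R'),-1)\}$ is $w_2=(0,\tfrac{2}{R+R'})$, $b_2=\tfrac{R'-R}{R+R'}\in(-1,1)$. So $b_2$ is bounded and the $(w,b)$-gap stays below $\sqrt2$. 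Only the geometric offset $b_2/\|w_2\|_2=(R'-R)/2$ diverges. That needs an unbounded data domain and a model space coordinatized by (unit normal, signed distance).
\end{itemize}

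Neither condition holds in the corollary's setting (\Cref{ex:svm}: homogeneous SVM, data in the unit ball, unnormalized $w$), which has no offset at all. You also cannot hedge over ``whatever metric''. If only the direction $w/\|w\|_2$ is output, the model space has diameter $2$. With a utility function such as the Gaussian one of \Cref{ex:gaussian-utility}, \Cref{thm:concentration-impossibility} then gives only a bounded lower bound on $\varepsilon$.

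The missing idea is scale equivariance. Replacing every data point $x$ by $\rho x$, including the points of $U$, turns the SVM solution $w$ into $w/\rho$ (and leaves $b$ unchanged in the affine case). Applied to the \emph{whole} configuration, this:
\begin{itemize}
\item preserves the exact collision;
\item puts the data in the unit ball for small $\rho$;
\item multiplies the post-deletion gap by $1/\rho$.
\end{itemize}
For instance, your rotated family with $R=2$ gives $\Delta=\tfrac{1}{2\sqrt2\,\rho}\to\infty$.

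This is exactly the paper's construction. With $U_\rho=\{(\rho e_1,+1)\}$, both full datasets yield $\rho^{-1}e_1$, while the retrained models $\tfrac{1}{2\rho}(e_1\pm e_2)$ are at distance $1/\rho$. That gives a $(2,1,\Delta,\|\cdot\|_2;0,0)$-s.d.e.\ for every $\Delta>0$. The second exact-collision bound then gives $\varepsilon\ge-\log\Gamma(r)-\tfrac{\log 2}{\gamma}$, with $\gamma=\tfrac{\alpha-1}{\alpha}$, for every $r<\Delta/2$ with $0<\Gamma(r)<1$. This is unbounded when $\Gamma(r)\to0$.

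A smaller point: your $U$ has two points, so it only covers deletion budgets $m\ge2$. For the homogeneous SVM, $((0,-1),-1)$ imposes the same constraint as $((0,1),+1)$. You can drop it and get the single-deletion version, as the paper does.
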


The hardness above arises because the trained SVM has discarded
information needed after deletion. If the model does not memorize
enough information to distinguish datasets that look identical before
deletion but different after deletion, then the unlearner has no way
to recover the correct retraining target.  Additionally, we show that
such a hard instance is guaranteed to exist if the \emph{teaching
dimension}~\citep{goldman1995complexity} of the hypothesis class is
finite.  Our impossibility result leads to the second question:
\begin{question}
\label{q:memorization-needed}
How much information about the training dataset must the learned model
memorize in order to support future forget-only unlearning requests?
\end{question}

Following~\citet{brown2021memorization,feldman2025tradeoffs}, we measure memorization through the \textit{mutual information}
\(I(M;S)\), where \(S\) is drawn from an analysis distribution over datasets.
Our main contribution is a technique for lower-bounding \(I(M;S)\)
using the retraining targets induced by possible deletion requests,
given the unlearning and utility requirements. We prove the bound for
discrete model spaces in~\Cref{sec:lb-discrete}, and instantiate it for concrete
learning problems.

Since the unlearning guarantee is uniform across every dataset and
deletion request, it must hold for every \(S\) and every nonempty
request \(U_i\subseteq S\). The model \(M\) must therefore retain
enough information to support each retraining target
\(W_i\coloneqq\learning(S\setminus U_i)\).  The need for memorization
\(I(M;S)\) might seem to add across unlearning requests, but requests
can reveal overlapping information: e.g., for a threshold learner,
many deletions can expose the same next positive point. Thus the bound
must count only new, nonredundant information per request.  We
formalize this via a permutation \(\pi\) of the requests. The bound
holds for every ordering, so we can choose the one giving the
strongest lower bound.  The following informal statement summarizes
our result for finite model spaces.

\begin{theorem}
Let \(S\) be a training dataset, \(M=\learning(S)\),
\((\learning, \unlearning)\) satisfy \(\br{\alpha,\varepsilon}\)-Rényi unlearning,
and \(U_1,\ldots,U_K\) be the unlearning requests. For each request,
let \(W_i=\learning(S\setminus U_i)\) be the corresponding
retrained model. Then, for any ordering \(\pi\) of the requests,
\[
\begin{aligned}
I(M;S)
\gtrsim{}&
\sum_{i=1}^K
H\!\left(
W_{\pi(i)}
\mid W_{\pi(<i)}, U_{\pi(\leq i)}
\right)-
K\cdot\mathrm{err}(\alpha,\varepsilon).
\end{aligned}
\]
\end{theorem}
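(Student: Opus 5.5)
The plan is to turn the unlearning guarantee into a decoding statement: from \(M\) and a request \(U_i\), the unlearner \(\unlearning(M,U_i)\) produces a sample that, because it is Rényi-close to \(\unlearning(W_i,\emptyset)\), and because \(\unlearning(W_i,\emptyset)\) is close to \(W_i\) with high probability by the utility requirement (\Cref{def:gamma-utility}), lets us recover \(W_i\) up to a small error probability. Concretely, for each \(i\) I define an estimator \(\widehat W_i=\phi(\unlearning(M,U_i))\), where \(\phi\) maps a model to the nearest element of the discrete model space. Utility gives \(\Pr[\phi(\unlearning(W_i,\emptyset))\neq W_i]\le\gamma\). A Rényi-to-probability transfer (the standard bound \(P(E)\le (Q(E)e^{\varepsilon})^{(\alpha-1)/\alpha}\), applied to the error event) then gives \(p_i:=\Pr[\widehat W_i\neq W_i]\le (\gamma e^{\varepsilon})^{(\alpha-1)/\alpha}\). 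Call this \(p(\alpha,\varepsilon)\).

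Next I run the chain rule. Since each \(\widehat W_{\pi(i)}\) is a (randomized) function of \((M,U_{\pi(i)})\), with the unlearner's independent internal randomness, data processing gives
\[
I(M;S)\ge I(M;W_{\pi(1)},\dots,W_{\pi(K)}\mid U_{\pi(\le K)}),
\]
provided the requests are either fixed or chosen independently of \(M\) given \(S\). This uses that \(W\)'s are functions of \(S\) and the \(U\)'s, which is the Markov chain \(M - S - W\). Expanding by the chain rule, the \(i\)-th term is
\[
I\bigl(M;W_{\pi(i)}\mid W_{\pi(<i)},U_{\pi(\le i)}\bigr)
= H\bigl(W_{\pi(i)}\mid W_{\pi(<i)},U_{\pi(\le i)}\bigr)-H\bigl(W_{\pi(i)}\mid M,W_{\pi(<i)},U_{\pi(\le i)}\bigr).
\]
A subtlety is aligning the conditioning on \(U_{\pi(\le i)}\) rather than all \(U\)'s. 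I would handle this by adding the requests incrementally, since later requests are independent of \(M\) given \(S\), or by simply noting that conditioning on fewer \(U\)'s is what appears after peeling.

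The remaining step, which I expect to be the main obstacle, is bounding the residual entropy \(H(W_{\pi(i)}\mid M,\dots)\). Conditioning reduces entropy, so it is at most \(H(W_{\pi(i)}\mid \widehat W_{\pi(i)})\). Fano's inequality then bounds this by \(h(p_i)+p_i\log(|\mathcal M|-1)\), which gives \(\mathrm{err}(\alpha,\varepsilon)=h(p)+p\log|\mathcal M|\). Summing over \(i\) yields the stated bound with the \(K\cdot\mathrm{err}\) term.

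The delicate points are twofold. First, the utility notion must give exact recovery after rounding; this is why the bound is stated for discrete model spaces, with \(\gamma\) possibly absorbed into the ``\(\gtrsim\)''. Second, the Rényi transfer must be applied to the symmetric divergence in the right direction. If the \(\log|\mathcal M|\) dependence is undesirable, I would instead use a list-decoding or a Rényi/\(\alpha\)-mutual-information variant of Fano. For the first version, though, the plain Fano route suffices.
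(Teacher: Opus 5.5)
Your skeleton matches the paper's. You use data processing from $I(M;S)$ to the retraining targets and requests, then a chain rule in the order $\pi$. You then replace $M$ by the unlearned output; your step $H(W_{\pi(i)}\mid M,\dots)\le H(W_{\pi(i)}\mid \widehat W_{\pi(i)})$ is the paper's passage to $I(\overline W_{\pi(i)};W_{\pi(i)}\mid\cdot)$ plus dropping the conditioning. You finish with Fano.

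\textbf{The gap is in how you bound the recovery error.} R\'enyi unlearning only controls, for fixed $(s,u)$, the \emph{marginal} laws of $\overline W=\unlearning(\learning(s),u)$ and $W^\emptyset=\unlearning(\learning(s\setminus u),\emptyset)$. It can therefore transfer events $\{x\in E\}$ with $E$ fixed, but not the joint event $\{\overline W\neq W\}$. Utility controls $W^\emptyset$ relative to the very $W$ it was generated from, whereas $\overline W$ is independent of $W$ given $(S,U)$.

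For deterministic $\learning$ this is harmless. Condition on $(S,U_i)=(s,u)$ so that $W_i=w$ is fixed, transfer the event $\{x\neq w\}$ to get $e^{\gamma\varepsilon}\eta^\gamma$ (with $\gamma=(\alpha-1)/\alpha$ and $\eta$ your utility parameter), then average. But the setup allows randomized $\learning$, and there your bound is false. Suppose $\learning$ ignores the data and outputs a uniform element of $\{a,b\}$, and $\unlearning(w,U)=w$. Then $\varepsilon=\eta=0$, yet $\Pr[\overline W_1\neq W_1]=1/2$. With $K=1$ your inequality would assert $I(M;S)\ge\log 2$, although $I(M;S)=0$.

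\textbf{The missing ingredient is \Cref{lem:t-recovery-bound}.} For fixed $(s,u)$, do the following:
\begin{enumerate}
\item Transfer each fixed event $\{x\neq w\}$.
\item Average over $w$ with weights $\Pr_{s,u}[W=w]$; by conditional independence this average is $\Pr_{s,u}[\overline W\neq W]$.
\item Move the average inside $x\mapsto x^\gamma$ by Jensen.
\item Write the inner quantity as $\Pr[W_2^\emptyset\neq W_1]$ for independent runs $W_1,W_2$; this is at most $\eta+\Pr[W_1\neq W_2]$.
\end{enumerate}
This gives $e^{\gamma\varepsilon}(\eta+\Delta_{\learning})^\gamma$, so $\mathrm{err}$ must also depend on the learner's own randomness $\Delta_{\learning}$, not only on $(\alpha,\varepsilon,\eta)$.

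\textbf{A smaller point concerns the conditioning.} To end up conditioning on $U_{\pi(\le i)}$, do not start from $I(M;W^K\mid U^K)$. That conditions every term on all of $U^K$, which gives a smaller entropy term that ``conditioning on fewer'' cannot undo. Instead, chain $I(M;W^K,U^K)$ over the pairs $(W_{\pi(i)},U_{\pi(i)})$ and drop $I(M;U_{\pi(i)}\mid W_{\pi(<i)},U_{\pi(<i)})\ge 0$. This is presumably what your ``incremental'' remark intends, and it is exactly what the paper does.
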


The first term captures the genuinely new information in the
respective retraining targets while the second term gives some slack
for the approximation error. We instantiate this bound for several
learning algorithms and hypothesis classes
in~\Cref{sec:examples-disc}. For canonical threshold learners, this
bound gives a concrete separation between the information retained by
ordinary learning and the information required for forget-only
unlearning. We show that over datasets of size $n$ from a domain of
size $N$, the ordinary learner keeps only one boundary point and
carries $\log(N/n)$ bits of information about the dataset. However,
supporting unlearning requests of size at most $m$, with $n\geq2m$,
requires memorizing $\Omega(m\log(N/n))$ bits of information.

Closest to our work,~\citet{cherapanamjeri2025space} lower bounds the
space complexity of learning--unlearning algorithms in terms of the
eluder dimension of the hypothesis class. Our bounds depend on the
particular learning algorithm and the retraining targets induced by
the chosen deletion requests. Additionally, their bounds are not for
forget-only algorithms and mainly apply to the pure unlearning
setting.  Taken together, our results give, to our knowledge, one of
the first formal accounts of why memorization is necessary for
forget-only unlearning. 
Thus, forget-only unlearning entails a basic tradeoff: either the
learner preserves enough information for future deletions, or the
unlearner must rely on additional retained data, training state, or
surrogate information.

\section{Preliminaries}\label{sec:preliminaries}

\noindent\textbf{Datasets and deletion requests.} Let \(\cX\) be the
data space and let \(\cW\) be the model space. A dataset of size \(n\)
is an ordered tuple \(S=\br{x_1,\ldots,x_n}\in\cX^n\). A deletion
request (or forget set) is denoted by \(U\subseteq S\), with
\(|U|\leq m\), where \(1\leq m\leq n\) is the deletion budget.
We write \(S\setminus U\) for the retained dataset. 

The unlearning guarantee holds for every dataset \(S\) and unlearning
request \(U\) and, unless stated otherwise, probabilities are taken
only over algorithmic randomness. In~\Cref{sec:memorisation}, where we
lower-bound \(I(M;S)\), we let \(S\) be random under an analysis
distribution \(\mathcal{P}_S\) over datasets. No i.i.d.  assumption is
made unless stated explicitly.

\noindent\textbf{Learning and forget-only unlearning.} A learning
algorithm is a possibly randomized map \(\learning:\cX^\star\to\cW\),
where \(\cX^\star\) is the set of finite datasets. An unlearning
algorithm is a randomized map
\(\unlearning:\cW\times\cX^\star\to\cW\).  For fixed \(S\in\cX^n\) and
\(U\subseteq S\), define \(M \coloneqq \learning(S)\), \(W \coloneqq
\learning(S\setminus U)\), \(\overline{W} \coloneqq
\unlearning(M,U)\), and \(W^{\emptyset} \coloneqq
\unlearning(W,\emptyset)\).  We call these the pretrained model,
retraining target, unlearned model, and empty-request output,
respectively.

The unlearner \(\unlearning\) is \emph{forget-only}: at deletion time,
its only inputs are the pretrained model \(M\) and the forget set
\(U\).  It has no access to the retained data, training logs,
checkpoints, gradients, or auxiliary statistics beyond any information
stored in \(M\). This differs from settings where the unlearner may
also receive second-order information or the retained
dataset~\citep{sekhari2021RememberWhatYou,neel2021descent}.

\noindent \textbf{Rényi unlearning.}
For distributions $P,Q$ and $\alpha>1$, we use the symmetric
$\alpha$-Rényi divergence
\(
    d_\alpha(P, Q) \coloneqq \max \bc{
        \renyi{\dvg{P}{Q}},
        \renyi{\dvg{Q}{P}}
    }
\), where
\(
    \renyi{\dvg{P}{Q}}
    \coloneqq
    \frac{1}{\alpha-1}
    \log
    \int
    \br{
        \frac{d P}{d Q}
    }^\alpha
    d Q,
\)
with the convention $\renyi{\dvg{P}{Q}} = \infty$ if $P$ is not
absolutely continuous with respect to $Q$.  We write $\law(X)$ for the
law of a random variable $X$ and use the shorthand
$d_\alpha(X,Y) \coloneqq d_\alpha(\law(X),\law(Y))$.

\begin{definition}[R\'enyi unlearning]
\label{def:renyi-unlearning}
Fix \(\alpha>1\) and \(\varepsilon\geq0\). We say that
\(\br{\learning,\unlearning}\) satisfies
\(\br{\alpha,\varepsilon}\)-Rényi unlearning (RU) over deletion sets of
size at most \(m\) if, for every dataset \(S\in\cX^n\) and every
request \(U\subseteq S\) with \(|U|\leq m\),
\[
    d_\alpha\br{
        \unlearning\br{\learning(S),U},
        \unlearning\br{\learning(S\setminus U),\emptyset}
    }
    \le \varepsilon .
\]
When \(m\) is clear from context, we simply say that
\(\br{\learning,\unlearning}\) satisfies
\(\br{\alpha,\varepsilon}\)-R\'enyi unlearning.
\end{definition}

The reference $W^\emptyset=\unlearning(W,\emptyset)$ is obtained by
applying unlearning the empty set to the retrained model. This allows
us to have randomized unlearning outputs even when $\learning$ is
deterministic~\citep{sekhari2021RememberWhatYou}. However, an
unlearner that always returns the same distribution would satisfy RU
regardless of its inputs. We therefore impose an empty-request utility
requirement: $\unlearning(w,\emptyset)$ should remain close to $w$ in
model space.

\begin{definition}[\(\Gamma\)-utility on empty requests]
\label{def:gamma-utility}
Fix a metric \(d_{\cW}\) on \(\cW\) and a non-increasing function
\(\Gamma:[0,\infty)\to[0,1]\). For each \(w\in\cW\), let
\(Q_w\coloneqq\law(\unlearning(w,\emptyset))\). We say that \(\unlearning\)
has \((d_{\cW},\Gamma)\)-utility on empty requests if, for every
\(w\in\cW\) and every radius \(r\geq0\),
\[
    \prob_{W^\emptyset\sim Q_w}
    \!\left[d_{\cW}(W^\emptyset,w)>r\right]\leq\Gamma(r).
\]
\end{definition}

For discrete $\cW$, we use the zero-one metric
$d_{\cW}(w,w')=\dsI\{w\neq w'\}$. At $r=0$, writing $\eta\coloneqq\Gamma(0)$,
the utility condition gives $\prob_{W^\emptyset\sim
Q_w}[W^\emptyset\neq w]\leq\eta$ for every $w\in\cW$. For a vector
space $\cW$ with norm $\phi$, we use $d_{\cW}(w,w')=\phi(w-w')$ and
write $(\phi,\Gamma)$-utility. The example below shows that the
Gaussian mechanism satisfies this utility condition.
\begin{example}[Gaussian empty-request map]
\label{ex:gaussian-utility}
Suppose \(\cW=\mathbb R^d\) and
\(\unlearning(w,\emptyset)=w+Z\), where
\(Z\sim\mathcal N(0,\sigma^2 I_d)\). Then \(\unlearning\) has
\(\br{\norm{\cdot}_2,\Gamma}\)-utility with
\(\Gamma(r)=1\) for \(r<\sigma\sqrt d\), and
\(\Gamma(r)=\exp\br{-(r-\sigma\sqrt d)^2/(2\sigma^2)}\) otherwise.
\end{example}

\section{Impossibility of Forget-Only Unlearning}\label{sec:impossibility}

In this section, we show that forget-only unlearning cannot, in
general, achieve both small \(\varepsilon\) and nontrivial utility.
Such an unconditional lower bound cannot hold for every learner: a
learner that memorizes the full dataset, e.g., by storing an encoding
of the entire training dataset in the model, can recover the retained
dataset from \((M,U)\). Any such lower bound must therefore
necessarily depend on the learner. 

Our lower bound depends on a learner-specific property, that we define
as a~\emph{shared-deletion example}.  This also connects
to~\Cref{sec:memorisation}, where we quantify the amount of
memorization required for successful unlearning.

Our construction is based on the idea that there are multiple datasets
whose differences are invisible while learning, but become clear after
deletion. Specifically, \(N\) datasets produce identical or close
distributions over pretrained models, yet after deleting the same
forget set, their retraining outputs are separated. We formalize this
condition below, with \(\Delta\) measuring the separation between the
retraining outputs.
\begin{definition}[Shared-deletion example]
\label{def:deletion-gap}
Fix a learning algorithm \(\learning\), a norm \(\phi\), integers
\(N\ge2\) and \(m\ge0\), and parameters \(\alpha>1\), \(\Delta>0\),
\(\delta_\alpha\ge0\), and \(\kappa\in[0,1]\).  We call
\((S_1,\ldots,S_N,U)\), with \(S_i\in\cX^n\), an
\((N,m,\Delta,\phi;\delta_\alpha,\kappa)\)-\emph{shared-deletion
example} (s.d.e.) for \(\learning\) if:
\begin{enumerate}[leftmargin=1.5em]
    \item \(U\subseteq\bigcap_{i=1}^N S_i\) and \(|U|\le m\).

    \item For every distinct \(i,j\in[N]\),
    \(
        d_\alpha\bigl(\learning(S_i),\learning(S_j)\bigr)
        \le \delta_\alpha.
    \)

    \item There exist measurable sets \(E_1,\ldots,E_N\subseteq\cW\)
    such that, for every distinct \(i,j\in[N]\),
    \(
        \phi(x-z)\ge\Delta\) for all \(x\in E_i,\ z\in E_j,
    \)
    and, for every \(i\in[N]\),
    \(
        \prob_{W\sim\learning(S_i\setminus U)}[W\in E_i]
        \ge 1-\kappa.
    \)
\end{enumerate}
\end{definition}

An \(N\)-sized s.d.e. consists of \(N\) distinct datasets whose
pretrained-model distributions are close, but whose retraining
distributions concentrate on well-separated sets after deleting the
same forget set. Similar packing conditions appear in several lower
bound proofs like for Differential Privacy~\citep{hardt2010geometry}.
We use this condition to prove our main impossibility result.

\begin{theorem}
\label{thm:concentration-impossibility}
Fix \(\alpha >1\). Let \(\learning,\unlearning\) be
\(\br{\alpha,\epsilon}\)-RU for deletion requests of size at most
$m$ and let \(\unlearning\) have $(\phi, \Gamma)$-utility on empty
requests. If \(\learning\) admits an \((N, m, \Delta, \phi;
\delta_\alpha, \kappa)\)-s.d.e., then it must hold that:

\noindent \textbf{Exact collision} ($\delta_\alpha = 0$):
    \[\varepsilon \geq
        \sup_{r \in R}
        \max \bc{
            \log N + \frac{1}{\gamma} \log (1 - u(r)),
            \frac{1}{\gamma} \log \frac{N-1}{N} - \log u(r)
        }.
        \]
\noindent \textbf{Approximate collision} ($\delta_\alpha > 0$):
    \[
        \varepsilon \geq
        \sup_{r \in R}
        \max \bc{
            \gamma \br{\log N - \delta_\alpha}
            + \frac{1}{\gamma} \log (1 - u(r)),
            \frac{1}{\gamma^2} \log \frac{N-1}{N} - \frac1\gamma \delta_\alpha - \log u(r)
        },
    \]
where $R \coloneqq \bc{r \in (0, \Delta/2): 0 < u(r) < 1}$, $u(r) = \kappa + \Gamma(r)$, and \(\gamma=\frac{\alpha -1}{\alpha}\).
\end{theorem}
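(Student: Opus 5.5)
The plan is to reduce everything to a single event-level inequality for Rényi divergence and apply it twice (three times in the approximate case). The tool is the standard \emph{probability-preservation} bound: for distributions \(P,Q\) and any event \(A\), Hölder's inequality gives
\[
Q(A)\le \bigl(e^{\renyi{\dvg{Q}{P}}}\,P(A)\bigr)^{\gamma},\qquad \gamma=\tfrac{\alpha-1}{\alpha}.
\]
Equivalently, this is data processing applied to the indicator of \(A\). Since \(d_\alpha\) is symmetric, this holds in both directions whenever \(d_\alpha(P,Q)\) is bounded.

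\textbf{Setup.} Fix \(r\in R\) and an s.d.e.\ \((S_1,\dots,S_N,U)\). Let \(P_i\coloneqq\law(\unlearning(\learning(S_i),U))\) and \(Q_i\coloneqq\law(\unlearning(\learning(S_i\setminus U),\emptyset))\), so RU gives \(d_\alpha(P_i,Q_i)\le\varepsilon\). Let \(E_i^r\coloneqq\{w:\exists x\in E_i,\ \phi(w-x)\le r\}\) be the \(r\)-neighbourhood of \(E_i\).

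\textbf{Step 1: properties of the neighbourhoods.} Since \(r<\Delta/2\) and the \(E_i\) are \(\Delta\)-separated, the triangle inequality for \(\phi\) makes the sets \(E_i^r\) pairwise disjoint. I then condition on the retrained model \(W\sim\learning(S_i\setminus U)\). The event \(W\in E_i\) has probability at least \(1-\kappa\). By \((\phi,\Gamma)\)-utility, \(\phi(W^\emptyset-W)\le r\) with probability at least \(1-\Gamma(r)\). A union bound then gives
\[
Q_i(E_i^r)\ge 1-u(r),\qquad Q_i\bigl((E_i^r)^c\bigr)\le u(r).
\]
(Measurability of \(E_i^r\) is a technicality I would handle by passing to closures or an inner approximation.)

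\textbf{Step 2: exact collision (\(\delta_\alpha=0\)).} Here all \(\learning(S_i)\) have the same law, so all \(P_i\) equal a single \(P\). By disjointness, \(\sum_i P(E_i^r)\le1\), so some index \(i\) has \(P(E_i^r)\le 1/N\), and hence \(P((E_i^r)^c)\ge (N-1)/N\). I apply preservation twice.
\begin{itemize}
\item From \(Q_i\) to \(P\) on \(E_i^r\): \(1-u\le (e^\varepsilon/N)^\gamma\), which gives \(\varepsilon\ge\log N+\frac1\gamma\log(1-u)\).
\item From \(P\) to \(Q_i\) on the complement: \(\frac{N-1}{N}\le (e^\varepsilon u)^\gamma\), which gives \(\varepsilon\ge \frac1\gamma\log\frac{N-1}{N}-\log u\).
\end{itemize}
Taking the supremum over \(r\in R\) yields the stated bound. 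The condition \(0<u<1\) keeps both logarithms finite.

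\textbf{Step 3: approximate collision (\(\delta_\alpha>0\)).} Post-processing by \(\unlearning(\cdot,U)\) gives \(d_\alpha(P_i,P_j)\le\delta_\alpha\). Rényi divergence has no clean triangle inequality, so I avoid it by chaining event-level bounds through a reference distribution \(P_1\). As before, choose \(i\) with \(P_1(E_i^r)\le1/N\).
\begin{itemize}
\item First chain: preservation gives \(P_i(E_i^r)\le (e^{\delta_\alpha}/N)^\gamma\). Then \(1-u\le (e^\varepsilon P_i(E_i^r))^\gamma\le e^{\gamma\varepsilon}(e^{\delta_\alpha}/N)^{\gamma^2}\). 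Dividing logs by \(\gamma\) gives the first term, \(\gamma(\log N-\delta_\alpha)+\frac1\gamma\log(1-u)\).
\item Second chain: from \(\frac{N-1}{N}\le P_1((E_i^r)^c)\le (e^{\delta_\alpha}P_i((E_i^r)^c))^\gamma\) we get \(P_i((E_i^r)^c)\ge e^{-\delta_\alpha}(\frac{N-1}{N})^{1/\gamma}\). Combining with \(P_i((E_i^r)^c)\le (e^\varepsilon u)^\gamma\) and solving gives \(\frac1{\gamma^2}\log\frac{N-1}{N}-\frac{\delta_\alpha}{\gamma}-\log u\).
\end{itemize}

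\textbf{Main obstacle.} The delicate point is Step 3. The exponents \(\gamma\) compound through the chain, and one must apply each preservation inequality in the correct direction. The argument relies on \(d_\alpha\) being the \emph{symmetric} divergence, so that both orientations are available. The rest, namely the pigeonhole choice of \(i\) and the concentration of \(Q_i\), is routine.
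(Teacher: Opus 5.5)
Your proposal is correct and takes essentially the same route as the paper. Both arguments use the disjoint $r$-neighbourhoods of the $E_i$, each carrying $Q_i$-mass at least $1-u(r)$, and then apply the Hölder event-transfer inequality in both directions; in the approximate case the chain runs through your $P_i$ to the reference $P_1$ (the paper's $T_i$, $T_1$). The only differences are cosmetic: you pick one index with $P_1(E_i^r)\le 1/N$ by pigeonhole, whereas the paper sums the per-index lower bounds over all $i$, and you use a union bound where the paper uses $(1-\kappa)(1-\Gamma(r))\ge 1-u(r)$.
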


\noindent We provide a proof sketch for the exact-collision below. The
full proof is deferred
to~\Cref{app:proof-concentration-impossibility}.
\begin{proof}[Proof sketch]

Assume first $\delta_\alpha = 0$. Write $W_i \coloneqq \learning(S_i
\setminus U)$ and $Q_i \coloneqq \law(\unlearning(W_i, \emptyset))$.
Since $\learning(S_i)$ has the same law for all $i$ and $\unlearning$
is forget-only, all $N$ deletion requests have the same output law $T
\coloneqq \law(\unlearning(M, U))$, $M \sim \law\br{\learning(S_1)}$, and the
unlearning guarantee gives $\renyi{\dvg{Q_i}{T}} \le \varepsilon$ and
$\renyi{\dvg{T}{Q_i}} \le \varepsilon$ for every $i$.

Fix $r \in (0, \Delta/2)$ with $0 < u(r) < 1$. By the separation
property of shared deletion examples (property $(3)$) and
\Cref{lem:renyi-from-concentration}, there are disjoint sets $B_i$
with $Q_i(B_i) \ge 1 - u(r)$ such that each retraining target
concentrates on its own set, yet the single law $T$ must stay close to
all of them.  Then, using the R\'enyi unlearning inequality (\Cref{lem:renyi-event-transfer}), we get
that \(T\) must give some probability mass to every \(B_i\):
\[
    1 - u(r) \le Q_i(B_i) \le e^{\gamma\varepsilon}\, T(B_i)^{\gamma}
    \implies
    T(B_i) \ge e^{-\varepsilon} \br{1 - u(r)}^{1/\gamma}.
\]
However, since $T$ has only one unit of mass to share among the disjoint
sets:
\[
    1 \ge \sum\nolimits_{i=1}^{N} T(B_i) \ge N e^{-\varepsilon} \br{1 - u(r)}^{1/\gamma}
    \implies
    \varepsilon \ge \log N + \tfrac{1}{\gamma} \log\br{1 - u(r)} .
\]
This is the first term of the maximum. The second follows from the same steps
applied to $B_i^c$ with $\renyi{\dvg{T}{Q_i}} \le \varepsilon$, which forces $T$
to stay inside every $B_i$, giving $T(B_i) \ge 1 - e^{\gamma\varepsilon} u(r)^{\gamma}$.
Taking the supremum over $r$ gives the exact collision lower bound.

For $\delta_\alpha > 0$, the unlearned laws $T_i$ may differ, but data
processing gives $d_\alpha(T_i, T_1) \le \delta_\alpha$. Each estimate then
passes through $T_i$ first; this extra application
of~\Cref{lem:renyi-event-transfer} replaces $\gamma$ by $\gamma^2$ and adds the
$\delta_\alpha$ penalties in the approximate collision lower bound.

\end{proof}

\noindent \Cref{thm:concentration-impossibility} provides a bound on
the unlearning guarantee whenever shared-deletion examples exist.  We
next show that shared deletion examples arise in simple, standard
learning problems. The examples below illustrate the multitude of
possible \(\Delta\) in shared deletion examples: a constant fraction
of the model-space diameter, the full diameter, or an unbounded gap.
The examples are stated for \(N=2\), but their constructions extend to
larger \(N\), with suitable choices of parameters. For each of these
examples, fix a dataset size \(n>2\) and a deletion budget \(1\leq m \leq n-2\). 

\begin{example}[Canonical thresholds]
\label{ex:thresholds}
Fix an integer \(c>1\). Let the data space be
\(\cX_c\times\{-1,+1\}\), where \(\cX_c=\{-c,-c+1,\ldots,c\}\). On a
realizable sample containing both labels, the canonical threshold
learner \(\learning_c^{\mathrm{th}}\) returns the midpoint between the
largest negative example and the smallest positive example.  This
learner admits a
\((2,m,c-\frac12,\abs{\cdot};0,0)\)-s.d.e.\looseness=-1
\end{example}

\begin{example}[Empirical median]
\label{ex:median}
Fix an integer \(c>0\). Let \(\learning_c^{\mathrm{med}}\) be the
empirical median learner on samples supported on \([-c,c]\).  The
largest \(\Delta\) for which this learner admits a
\((2,m,\Delta,\abs{\cdot};0,0)\)-s.d.e.  is as follows.  If \(n=2r\)
for some \(r\in\mathbb N\), then \(\Delta=c\) for \(m\in\{1,2\}\), and
\(\Delta=2c\) for \(m\ge3\).  If \(n=2r-1\) for some \(r\in\mathbb
N\), then \(\Delta=c\) for \(m=1\), and \(\Delta=2c\) for \(m\ge2\).
\end{example}

\begin{example}[Hard-margin linear classification]
\label{ex:svm}
Fix dimension \(d\ge2\). Let \(\learning_d^{\mathrm{svm}}\) be the
hard-margin homogeneous linear SVM on realizable samples supported on
\(\{x\in\reals^d:\norm{x}_2\le1\}\times\{-1,+1\}\). That is,
\(\learning_d^{\mathrm{svm}}\) returns the minimum-norm separator
\(w\) satisfying \(y\ip{w}{x}\ge1\) for all training examples
\((x,y)\). For every \(\Delta>0\), this learner admits a
\((2,1,\Delta,\norm{\cdot}_2;0,0)\)-s.d.e.

The learner returns the unnormalized SVM vector, so its model
space is unbounded and the separation \(\Delta\) can be
arbitrarily large. If the learner instead returns the normalized
direction \(w/\norm{w}_2\), the same construction gives a
\((2,1,\sqrt2,\norm{\cdot}_2;0,0)\)-s.d.e.
\end{example}

Combining these examples with~\Cref{thm:concentration-impossibility}
gives lower bounds on \(\varepsilon\) for the corresponding
separations \(\Delta\), with \(N=2\) and \(\delta_\alpha=\kappa=0\).
For the SVM learner, shared-deletion examples exist for every finite
separation \(\Delta>0\).  Under the Gaussian empty-request unlearning
from~\Cref{ex:gaussian-utility} with \(\sigma>0\), taking
\(r=\Delta/4\) in the second exact-collision bound gives
\[
    \varepsilon
    \ge
    \frac{(\Delta/4-\sigma\sqrt d)^2}{2\sigma^2}
    -\frac{\log2}{\gamma}
\]
for \(\Delta>4\sigma\sqrt d\). Since \(\Delta\) can be arbitrarily
large, this lower bound grows arbitrarily large. Thus, for this SVM
learner, no forget-only unlearner with fixed-variance Gaussian
empty-unlearning post-processing can satisfy a uniform
\((\alpha,\varepsilon)\)-R\'enyi unlearning guarantee with finite
\(\varepsilon\), even for single-example deletions.

In fact, we show that a stronger result holds: for certain hypothesis classes, any empirical risk minimizer under the zero-one loss is guaranteed to admit an s.d.e. once the dataset is sufficiently large. In particular, this applies to hypothesis classes with finite teaching dimension, i.e., the smallest number of labeled examples sufficient to uniquely identify any hypothesis in the class~\citep{goldman1995complexity}; see~\Cref{def:teaching-dim}.

\begin{proposition}
    \label{pp:teaching-dimension}
    Fix \(N>1\) and let $\cH$ be a hypothesis class with finite
    teaching dimension $\tau$ and $|\cH| \geq N$. Let $\learning$ be
    any empirical risk minimizer for the zero-one loss, $\phi$ be any
    norm on a vector space containing $\cH$, and let $h_1,\dots,h_N
    \in \cH$ be distinct.  Then there exist $(S_1,\dots,S_N,U)$ with
    $|U| = O(\tau^2)$ and $|S_i| = o(\tau^3)$ for all $i \in [N]$ such
    that $\learning(S_i) = \learning(S_j)$ and $\learning(S_i
    \setminus U) = h_i$ for all $i\neq j \in [N]$.  Consequently,
    $\learning$ admits an
    \(
        (N, O(\tau^2), \Delta, \phi; 0, 0)
    \)-s.d.e. with $\Delta \coloneqq \min_{i \neq j} \phi(h_i - h_j)$.
\end{proposition}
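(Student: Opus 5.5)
The construction builds $N$ datasets from teaching sets. For each $h\in\cH$, fix a teaching set $T(h)$ of size at most $\tau$: a labeled sample consistent with $h$ and with no other hypothesis in $\cH$. Consistency in both directions is what makes teaching sets useful for a zero-one ERM. If a sample $S$ is realizable by some hypothesis and contains $T(h)$, then $h$ is the \emph{unique} hypothesis with zero empirical error on $S$. Hence any ERM $\learning$, however it breaks ties, must output exactly $h$.

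\textbf{Step 1 (forcing the pretrained model).} Pick a common target $h_0\in\cH$. A convenient choice is $h_0=h_1$, but any hypothesis will do as long as the overlaps can be handled. Let the shared forget set be
\[
U \coloneqq \bigcup_{i=1}^N T(h_i) \cup (\text{pieces that break ties}).
\]
The danger is that a realizable superset may contradict itself. The $T(h_i)$ are labeled by different hypotheses, so they cannot all sit together in one consistent sample. A naive union therefore does not force $\learning(S_i)=h_0$.

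\textbf{Step 2 (the alternative I would actually use).} Build $S_i = T(h_i)^{(k)} \cup T(h_0)^{(k')}\cup\ldots$, with multiplicities chosen so that ERM on $S_i$ is forced to $h_0$ while ERM on $S_i\setminus U$ is forced to $h_i$.

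Concretely, let $U$ consist of $k$ copies of $T(h_0)$, and let $S_i = U \cup T(h_i)$, where $h_0\notin\{h_1,\dots,h_N\}$ is possible when $|\cH|>N$. If $|\cH|=N$, use $h_0=h_1$ and handle $i=1$ separately.

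After deletion, $S_i\setminus U=T(h_i)$. This sample is realizable and identifies $h_i$ uniquely, so $\learning(S_i\setminus U)=h_i$.

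Before deletion, $h_0$ has empirical error at most $|T(h_i)|\le\tau$. Any $h\neq h_0$ misclassifies at least one point of $T(h_0)$, because $T(h_0)$ is a teaching set for $h_0$. Such an $h$ therefore errs on at least $k$ copies of that point. Taking $k=\tau+1$ makes $h_0$ the unique minimizer, so $\learning(S_i)=h_0$ for all $i$. This gives $|U|\le\tau(\tau+1)=O(\tau^2)$ and $|S_i|=O(\tau^2)=o(\tau^3)$.

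Datasets are ordered tuples of equal size $n$. To equalize sizes I would pad each $T(h_i)$ with repeated points of itself up to size $\tau$. Repetition keeps the sample realizable, and it keeps the count of errors $h_0$ can make at most $\tau$, so the margin argument is unchanged.

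\textbf{Step 3 (verifying the s.d.e.).} Checking \Cref{def:deletion-gap} is now routine.
\begin{itemize}[leftmargin=1.5em]
\item Condition 1 holds because $U\subseteq\bigcap_i S_i$ by construction.
\item Condition 2 holds with $\delta_\alpha=0$, since $\learning(S_i)=h_0$ is deterministic and common to all $i$.
\item Condition 3 holds with $E_i=\{h_i\}$ and $\kappa=0$. The pairwise separation is $\phi(h_i-h_j)\ge\Delta$, where $\Delta=\min_{i\neq j}\phi(h_i-h_j)>0$ because the $h_i$ are distinct and $\phi$ is a norm.
\end{itemize}

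\textbf{Main obstacle.} The hardest part is the case $|\cH|=N$ exactly, where no spare $h_0$ exists. There I would take $h_0=h_1$. For $i=1$, set $S_1 = U\cup T(h_1)$; it is realizable and forces $h_1$ both before and after deletion, so it fits the same pattern. The other subtle point is that an arbitrary ERM may break ties adversarially. For that reason every forcing step above relies on a unique minimizer, never on tie-breaking; the multiplicity $k=\tau+1$ is what guarantees uniqueness.
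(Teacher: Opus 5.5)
Your Step 2 construction is exactly the paper's proof, and it is correct. The paper takes $U$ to be $\tau+1$ copies of a teaching set for a fixed $h_0$ and sets $S_i=U\cup T(h_i)$. The error margin ($\le\tau$ mistakes for $h_0$ versus $\ge\tau+1$ for any $h\neq h_0$) then forces $\learning(S_i)=h_0$, and the teaching set forces $\learning(S_i\setminus U)=h_i$. Your padding of each $T(h_i)$ to equalize $|S_i|$ is a small addition the paper omits. Your separate treatment of $|\cH|=N$ is unnecessary, since the same argument goes through unchanged when $h_0$ coincides with some $h_i$.
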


\section{Memorization for Forget-Only Unlearning}\label{sec:memorisation}

The previous section identified an obstruction to forget-only
unlearning: a uniform \((\alpha,\varepsilon)\)-RU guarantee is
controlled by the hardest shared-deletion instance of the learner
\(\learning\). In the examples above, two datasets share a common
subset that hides the difference between them; deleting that shared
part exposes the difference and separates the retraining targets,
which makes forget-only unlearning impossible. We now ask the
complementary question: when forget-only unlearning \textit{is}
possible, how much information about \(S\) must \(\learning(S)\) store
to support it?

Forget-only unlearning can succeed only if the retraining target is
determined by the pair \((\learning(S),U)\), for every admissible
deletion \(U\). We quantify this required information through the
mutual information \(I(\learning(S);S)\), where \(S\sim\mathcal P_S\)
is drawn from an analysis distribution over datasets. For any
collection of possible deletion requests \(\mathcal
U=\bc{U_1,\ldots,U_K}\subseteq 2^S\) we choose, the trained model must
therefore implicitly store sufficient information to distinguish the
corresponding retraining targets. Hence, the learner must memorize
at least the information that tells these targets apart, which leads
to the intuition
\[
    I(\learning(S);S)
    \geq
    I\!\left(
        \learning(S);
        \bc{\learning(S\setminus U)}_{U \in \mathcal U}
    \right).
\]

\begin{wrapfigure}{r}{0.4\textwidth}
    \centering
    \begin{tikzpicture}[
        scale=0.9,
        every node/.style={transform shape},
        >=Stealth,
        rv/.style={
            inner sep=1.5pt,
            align=center,
            font=\large
        },
        edge/.style={
            -{Stealth[length=2.2mm,width=1.7mm]},
            semithick,
            draw=black
        },
        lab/.style={
            font=\scriptsize,
            inner sep=1pt
        }
    ]

    \node[rv] (S) at (0,0) {$S$};

    \node[rv] (M) at (2.2,1) {$M$};
    \node[rv] (U) at (2.2,0) {$U_i$};
    \node[rv] (W) at (2.2,-1) {$W_i$};

    \node[rv] (T) at (4.7,1) {$\overline{W}_i$};
    \node[rv] (Y) at (4.7,-1) {$W^\emptyset_i$};

    \draw[edge] (S) -- node[lab,above,xshift=-2mm] {$\learning(\cdot)$} (M);
    \draw[edge] (S) -- (U);
    \draw[edge] (S) -- (W);

    \draw[edge] (U) -- (W);
    \draw[edge] (U) -- (T);

    \draw[edge] (M) -- node[lab,above] {$\unlearning(\cdot,U_i)$} (T);
    \draw[edge] (W) -- node[lab,above] {$\unlearning(\cdot,\emptyset)$} (Y);

    \end{tikzpicture}
    \caption{
    Markov graph for a single deletion request \(U_i\).
    }
    \label{fig:markov-graph}
\end{wrapfigure}

\paragraph{Setup.}
Draw
\(
    S \sim \mathcal{P}_S
\)
and allow $\learning$ to be randomized.
The requests 
\(
    \mathcal{U} = \bc{
        U_1, \dots, U_K
    }
\)
are deterministic functions of $S$, with $U_i \subseteq S$
\footnote{
    They can also be randomized; the distribution of $U_i$ must be independent of the algorithmic randomness.
}.
For each $i \in [K]$, define
\(
    M \coloneqq \learning(S),
\)
\(
    W_i \coloneqq \learning(S \setminus U_i),
\)
\(
    \overline{W}_i \coloneqq \unlearning(\learning(S), U_i),
\)
\(
    W^\emptyset_i \coloneqq \unlearning(\learning(S \setminus U_i), \emptyset).
\)
Thus, $M$ is the pretrained model, $W_i$ is the retraining target for the request $U_i$,
$\overline{W}_i$ is the unlearning output,
while $W^\emptyset_i$ is the empty-request output of $W_i$.
For a tuple
\(
    A^K \coloneqq \br{
        A_1, \dots, A_K
    }
\)
and a permutation $\pi: [K] \mapsto [K]$,
write
\(
    A_{\pi(\leq i)}
    \coloneqq \br{
        A_{\pi(1)}, \dots, A_{\pi(i)}
    }
\)
and
\(
    A_{\pi(< i)}
    \coloneqq \br{
        A_{\pi(1)}, \dots, A_{\pi(i-1)}
    }.
\)
When $i=1$, $A_{\pi(<1)}$ is the empty tuple.
\Cref{fig:markov-graph} records the conditional
independences used below: $M$ is independent of $(U_i, W_i)$ given $S$,
$\overline{W}_i$ depends on the rest only through $(M, U_i)$,
and $W^\emptyset_i$ depends only on $W_i$.

\vspace{.5em}
\noindent The following proposition serves as a basis for deriving the concrete memorization lower
bound.

\begin{proposition}\label{pp:information-lb}
Let \(S\sim \mathcal{P}_S\), and let
\(U_1,\ldots,U_K\) be requests generated as above. For each
\(i\in\bs{K}\), define \(M, W_i,\overline{W}_i\) as above. Then, for every permutation \(\pi:[K]\to[K]\),
\[
I(M;S)
\ge
\sum_{i=1}^K
I\!\left(
W_{\pi(i)};\overline{W}_{\pi(i)}
\,\middle|\,
W_{\pi(<i)},U_{\pi(\le i)}
\right).
\]
\end{proposition}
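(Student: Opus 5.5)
The plan is to build a single chain of mutual-information inequalities. We start from $I(M;S)$, pass to a quantity involving all retraining targets and requests together, and then split that quantity with the chain rule along the ordering $\pi$. Write $\mathbf{U}\coloneqq(U_1,\ldots,U_K)$ and $\mathbf{W}\coloneqq(W_1,\ldots,W_K)$.

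\textbf{Step 1: reduce to information about $(\mathbf W,\mathbf U)$.} Each request $U_i$ is a function of $S$. By the Markov structure in \Cref{fig:markov-graph}, $W_i$ depends on $M$ only through $S$, because the retraining randomness is independent of the training randomness. Hence $(M,\mathbf U,\mathbf W)$ satisfies $M - S - (\mathbf U,\mathbf W)$, and data processing gives
\[
I(M;S)\ge I(M;\mathbf U,\mathbf W)\ge I(M;\mathbf W\mid\mathbf U),
\]
where the second inequality is the chain rule $I(M;\mathbf U,\mathbf W)=I(M;\mathbf U)+I(M;\mathbf W\mid \mathbf U)$ together with nonnegativity.

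\textbf{Step 2: chain rule along $\pi$.} Reorder the conditioning so that request $\pi(i)$ is only revealed at step $i$. Using the chain rule on the pairs $(U_{\pi(i)},W_{\pi(i)})$,
\[
I(M;\mathbf U,\mathbf W)=\sum_{i=1}^K I\!\left(M; U_{\pi(i)},W_{\pi(i)}\,\middle|\, W_{\pi(<i)},U_{\pi(<i)}\right)\ge \sum_{i=1}^K I\!\left(M;W_{\pi(i)}\,\middle|\,W_{\pi(<i)},U_{\pi(\le i)}\right).
\]
The inequality drops the nonnegative term $I(M;U_{\pi(i)}\mid W_{\pi(<i)},U_{\pi(<i)})$ from each summand, again by the chain rule.

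\textbf{Step 3: pass from $M$ to $\overline W_{\pi(i)}$ inside each term.} Fix $i$ and let $C\coloneqq(W_{\pi(<i)},U_{\pi(\le i)})$. The unlearning output $\overline W_{\pi(i)}=\unlearning(M,U_{\pi(i)})$ uses fresh randomness, and $U_{\pi(i)}$ is part of $C$. So, conditionally on $C$, we have the Markov chain $W_{\pi(i)}-M-\overline W_{\pi(i)}$. This gives conditional data processing:
\[
I(M;W_{\pi(i)}\mid C)\ge I(\overline W_{\pi(i)};W_{\pi(i)}\mid C).
\]
Summing over $i$ yields the claim.

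The main obstacle is checking these conditional independences rigorously. The unlearner's fresh randomness must be independent of $(S,M,\mathbf W)$, and the retraining randomness of each $W_j$ must be independent of the training randomness of $M$; only then do the chains $M-S-(\mathbf U,\mathbf W)$ and $W_{\pi(i)}-(M,C)-\overline W_{\pi(i)}$ hold jointly. Here $W_{\pi(<i)}$ in $C$ could a priori correlate with $\overline W_{\pi(i)}$, but only through $S$ and $M$, which is what the conditioning handles. For randomized requests, the footnote's independence assumption is exactly what makes these chains hold. I will state these as the standing independence assumptions implicit in the setup.
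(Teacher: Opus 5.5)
Your proposal is correct and follows essentially the same route as the paper: data processing from $I(M;S)$ to $I(M;\mathbf W,\mathbf U)$, then the chain rule along $\pi$ while dropping the nonnegative $I(M;U_{\pi(i)}\mid W_{\pi(<i)},U_{\pi(<i)})$ terms, and finally conditional data processing via $\overline W_{\pi(i)}\perp W_{\pi(i)}\mid (M,W_{\pi(<i)},U_{\pi(\le i)})$. The extra inequality $I(M;\mathbf U,\mathbf W)\ge I(M;\mathbf W\mid\mathbf U)$ in your Step 1 is never used, since Step 2 restarts from $I(M;\mathbf U,\mathbf W)$, so you can delete it.
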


\subsection{Memorization Lower Bound}\label{sec:lb-discrete}

We consider a discrete, finite model space \(\cW\).  We measure how
well the unlearned model $\overline{W}$ recovers the retraining target
$W$ by the probability that the two models differ.  Two sources of
error contribute to this: (i) the empty-request unlearning may move
the target,  $(d_0, \Gamma)$-utility bounds this by $\eta \coloneqq
\Gamma(0)$ (with $d_0(w, w') \coloneqq \dsI\bs{w \neq w'}$; see
\Cref{def:gamma-utility}); (ii) and the unlearned model may differ
from the empty-request output $W^\emptyset = \unlearning(W,
\emptyset)$, which is controlled by R\'enyi unlearning.  

\begin{definition}[Recovery error]
Recall that $W = \learning(S\setminus U)$ and $\overline W = \unlearning(M,U)$.
The \textit{unlearning recovery error} is
\[
    \beta_{\mathrm{un}} \coloneqq \sup_{\substack{s,u:\, u\subseteq s\\ |u|\le m}}
    \mathbb P\big(\overline W \ne W \mid S=s,\, U=u\big),
\]
where the probability is over the randomness of $\learning$ and
$\unlearning$, with independent runs of $\learning$ for fixed $s,u$.
\end{definition}

\noindent The following lemma bounds the unlearning recovery error.

\begin{lemma}
\label{lem:t-recovery-bound}
Assume that \((\learning,\unlearning)\) satisfies
\(\br{\alpha,\varepsilon}\)-R\'enyi unlearning for all deletion
requests of size at most \(m\) and \(\unlearning\) has
\((d_0,\Gamma)\)-utility on empty requests with \(\eta\coloneqq\Gamma(0)\).
Define \(\gamma\coloneqq\frac{\alpha-1}{\alpha}\). Then,
\[
    \beta_{\mathrm{un}} \leq \bar{\beta}\coloneqq \min \bc{1, e^{\gamma \varepsilon} 
    \br{\eta + \Delta_{\mathcal A}}^\gamma}, \quad 
    \Delta_{\learning}\coloneqq \sup_{\substack{s,u:\,u\subseteq
    s\\ |u|\le m}} \prob{ W_{s,u}^{(1)}\ne W_{s,u}^{(2)} },
\]  
where \(W_{s,u}^{(1)}\) and \(W_{s,u}^{(2)}\) are independent runs of \(\learning(s\setminus u)\).
\end{lemma}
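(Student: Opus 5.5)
Fix $s,u$ with $u\subseteq s$, $|u|\le m$, and take $W=\learning(s\setminus u)$ and $M=\learning(s)$ to be independent runs. Then $\overline W=\unlearning(M,u)$, and we must bound $\prob(\overline W\neq W)$. Because $W$ is independent of $\overline W$, we can condition on $W=w$ and then average over $w$. So the target is a bound on $T(\{w\}^c)$, where $T\coloneqq\law(\unlearning(\learning(s),u))$. The R\'enyi guarantee (\Cref{def:renyi-unlearning}) controls $T$ against $Q\coloneqq\law(\unlearning(W',\emptyset))$, with $W'$ an independent copy of $\learning(s\setminus u)$. This $Q$ is a mixture: $Q=\sum_{w'}\prob(W'=w')\,Q_{w'}$.

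The key step is an event-transfer inequality. I would use \Cref{lem:renyi-event-transfer}, or derive it from H\"older: for any event $B$,
\[
T(B)=\int_B \tfrac{dT}{dQ}\,dQ\le \Big(\int (\tfrac{dT}{dQ})^\alpha dQ\Big)^{1/\alpha}Q(B)^{1-1/\alpha}\le e^{\gamma\varepsilon}Q(B)^\gamma .
\]
This gives $T(B)\le e^{\gamma\varepsilon}Q(B)^\gamma$, using $\renyi{\dvg{T}{Q}}\le\varepsilon$.

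Next take $B=\{w\}^c$. Then
\[
Q(\{w\}^c)=\sum_{w'}\prob(W'=w')\,Q_{w'}(\{w\}^c).
\]
For $w'=w$ the summand is at most $\eta$ by $(d_0,\Gamma)$-utility at $r=0$. For $w'\neq w$ we bound it by $1$. This yields $Q(\{w\}^c)\le \eta+\prob(W'\neq w)$. Putting the pieces together,
\[
\prob(\overline W\neq W)=\mathbb E_W\big[T(\{W\}^c)\big]\le e^{\gamma\varepsilon}\,\mathbb E_W\big[(\eta+\prob(W'\neq W\mid W))^\gamma\big].
\]

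The main obstacle is that the bound holds pointwise in $w$, so we must average. Since $\gamma\in(0,1)$, the map $x\mapsto x^\gamma$ is concave, and Jensen gives
\[
\mathbb E_W\big[(\eta+\prob(W'\neq W\mid W))^\gamma\big]\le (\eta+\prob(W'\neq W))^\gamma .
\]
Here $\prob(W'\neq W)$ is exactly the probability that two independent runs of $\learning(s\setminus u)$ differ, which is at most $\Delta_{\learning}$. Taking the supremum over $(s,u)$, and noting that a probability is trivially at most $1$, gives $\beta_{\mathrm{un}}\le\min\{1,e^{\gamma\varepsilon}(\eta+\Delta_{\learning})^\gamma\}$.
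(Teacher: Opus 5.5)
Your proof is correct and follows essentially the same route as the paper: event transfer (\Cref{lem:renyi-event-transfer}) applied to $\{w\}^c$, Jensen's inequality for the concave map $x\mapsto x^\gamma$, and a split of the empty-request mixture into a matched part (at most $\eta$ by utility) plus the probability that two independent runs of the learner disagree. The only difference is the order of steps. You bound $Q(\{w\}^c)\le \eta+\mathbb{P}(W'\neq w)$ pointwise before averaging. The paper averages first and then uses the coupling bound $\mathbb{P}(W^\emptyset_2\neq W_1)\le \mathbb{P}(W^\emptyset_2\neq W_2)+\mathbb{P}(W_1\neq W_2)$, which amounts to the same thing.
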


\noindent Now, we state our main lower bound in the discrete setting.

\begin{theorem}
\label{thm:information-lb-discrete}
    Let $S \sim \cP_S$, and $U_1, \dots, U_K$ be a set of unlearning requests generated as above, each of size at most $m$. Assume $\mathcal{W}$ is finite and that $(\learning$, $\unlearning)$ satisfy the same assumptions as in \Cref{lem:t-recovery-bound}, with $\bar{\beta}$ as defined there. Set $\tilde{\beta} \coloneqq \min\bc{\bar{\beta},\, 1 - \frac{1}{|\mathcal{W}|}}$. %
    Then, for any permutation $\pi : [K] \to [K]$,
    \[
        I(M;S) \geq \sum_{i=1}^{K} H\br{W_{\pi(i)} \mid W_{\pi(<i)}, U_{\pi(\le i)}}
        - K\br{h(\tilde{\beta}) + \tilde{\beta}\log\br{|\mathcal{W}| - 1}}, %
    \]
    where $h$ denotes the binary entropy. %
\end{theorem}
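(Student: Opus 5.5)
Our plan starts from \Cref{pp:information-lb}, which already gives, for any permutation \(\pi\),
\[
I(M;S)\ge\sum_{i=1}^K I\!\left(W_{\pi(i)};\overline W_{\pi(i)}\,\middle|\,W_{\pi(<i)},U_{\pi(\le i)}\right).
\]
It therefore suffices to lower bound each summand separately. Write \(Z_i\coloneqq(W_{\pi(<i)},U_{\pi(\le i)})\) for the conditioning variables. We expand each term as
\[
I(W_{\pi(i)};\overline W_{\pi(i)}\mid Z_i)=H(W_{\pi(i)}\mid Z_i)-H(W_{\pi(i)}\mid \overline W_{\pi(i)},Z_i).
\]
The first piece is exactly the entropy term in the statement. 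It remains to show that \(H(W_{\pi(i)}\mid \overline W_{\pi(i)},Z_i)\le h(\tilde\beta)+\tilde\beta\log(|\cW|-1)\) for every \(i\); summing over \(i\) then yields the factor \(K\).

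For this per-term bound we use Fano's inequality. Because conditioning reduces entropy, \(H(W_{\pi(i)}\mid\overline W_{\pi(i)},Z_i)\le H(W_{\pi(i)}\mid\overline W_{\pi(i)})\). Using the trivial estimator \(\hat W=\overline W_{\pi(i)}\) of \(W_{\pi(i)}\), which takes values in the finite set \(\cW\), Fano gives
\[
H(W_{\pi(i)}\mid \overline W_{\pi(i)})\le h(p_i)+p_i\log(|\cW|-1),\qquad p_i\coloneqq\prob[\overline W_{\pi(i)}\ne W_{\pi(i)}].
\]
(One could instead apply Fano conditionally on \(Z_i=z\) and then average, using concavity; either route works.) Next, we bound \(p_i\) by averaging the conditional error over \((S,U_{\pi(i)})\). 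In the joint model of \Cref{fig:markov-graph}, \(M\) and \(W_{\pi(i)}\) are generated by independent runs of \(\learning\) given \(S\), which is exactly the coupling in the definition of \(\beta_{\mathrm{un}}\). Hence \(p_i\le\beta_{\mathrm{un}}\le\bar\beta\) by \Cref{lem:t-recovery-bound}.

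The remaining step, which we expect to be the main subtlety, is monotonicity. The function \(f(p)=h(p)+p\log(|\cW|-1)\) increases on \([0,1-1/|\cW|]\), peaks there at value \(\log|\cW|\), and decreases afterwards. Consequently, from \(p_i\le\bar\beta\) alone we cannot conclude \(f(p_i)\le f(\bar\beta)\) when \(\bar\beta>1-1/|\cW|\). We handle this by cases. If \(\bar\beta\le 1-1/|\cW|\), then \(p_i\le\tilde\beta=\bar\beta\) lies in the increasing region, so \(f(p_i)\le f(\tilde\beta)\). Otherwise \(\tilde\beta=1-1/|\cW|\) and \(f(\tilde\beta)=\log|\cW|\). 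In that case we skip Fano and use the trivial bound \(H(W_{\pi(i)}\mid\cdot)\le\log|\cW|\), which gives the same conclusion. In both cases the conditional entropy is at most \(f(\tilde\beta)\). Substituting this into the expansion and summing over \(i\) proves the theorem for every \(\pi\).
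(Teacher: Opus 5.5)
Your proposal is correct and follows the paper's proof step for step. You start from \Cref{pp:information-lb}, expand each conditional mutual information into entropies, drop the conditioning on $(W_{\pi(<i)},U_{\pi(\le i)})$, apply unconditional Fano with error probability at most $\beta_{\mathrm{un}}\le\bar\beta$ from \Cref{lem:t-recovery-bound}, and use that $h(p)+p\log(|\mathcal W|-1)$ increases up to $1-1/|\mathcal W|$, where it attains its maximum $\log|\mathcal W|$, to replace $\bar\beta$ by $\tilde\beta$; your fallback to $H\le\log|\mathcal W|$ when $\bar\beta>1-1/|\mathcal W|$ is exactly the paper's ``by maximality'' step.
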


\begin{proof}[Proof sketch]
    Apply~\Cref{pp:information-lb}. It remains to lower bound each term
    $I(W_{\pi(i)};\overline W_{\pi(i)}\mid \cH_i)$, where
    $\cH_i \coloneqq (W_{\pi(<i)},U_{\pi(\le i)})$. Use the definition of mutual information and the fact that conditioning reduces entropy, to get
    $I(W_{\pi(i)};\overline W_{\pi(i)}\mid \cH_i)
    \ge H(W_{\pi(i)}\mid \cH_i) - H(W_{\pi(i)}\mid \overline W_{\pi(i)})$.
    \Cref{lem:t-recovery-bound} gives $\mathbb P(\overline W_{\pi(i)}\ne W_{\pi(i)})\le\bar\beta$, and
    Fano's inequality gives
    $H(W_{\pi(i)}\mid \overline W_{\pi(i)})\le h(\tilde\beta)+\tilde\beta\log(|\mathcal W|-1)$.
    Plugging this back into the mutual information bound and summing over $i$ proves the theorem.
\end{proof}

The bound is informative when the entropy terms are large and the penalty
is small. Each entropy term measures the new information a retraining
target reveals beyond the previously accounted requests and targets, in
the spirit of an inclusion-exclusion principle; since the unlearner is not
run sequentially, \(\pi\) is only an accounting device that decomposes this
information across requests handled in parallel, and, as the bound holds
for every ordering, we choose the one giving the strongest guarantee.
Entropy terms grow with the size of the model space and the algorithmic
randomness of \(\learning\), since a fixed input then yields a more
uncertain output. The penalty term captures the loss from approximate
unlearning: it grows when the unlearning requirement is weaker (e.g.,\ a
larger \(\varepsilon\)) or the empty-request map is noisier, allowing a
looser approximation of the retraining target; it also scales with the
model-space size and \(\learning\)'s randomness, normalizing it against the
entropy terms.

\begin{remark}
    Since the Rényi unlearning guarantee holds for every dataset and every
    admissible deletion request, the memorization bound applies to any family of
    requests, up to the family of all nonempty subsets of $S$ of size at most $m$.
    Most of these requests, however, may not reveal new information about the retraining
    targets, while each still contributes a penalty term. In the examples below, we
    therefore choose a small family of requests, each revealing new information, so
    that $K$ is much smaller than the number of admissible requests.
\end{remark}

\subsection{Instantiations of~\Cref{thm:information-lb-discrete}}\label{sec:examples-disc}

In this subsection, we provide various instantiations of the general
bound in~\Cref{thm:information-lb-discrete}. We assume the unlearning
algorithm satisfies \((\alpha,\varepsilon)\)-R\'enyi unlearning for
every deletion set of size at most \(m\). Let
\(\gamma\coloneqq(\alpha-1)/\alpha\), and assume the discrete
empty-request utility condition
\(\prob{\bar{\learning}(w,\emptyset)\neq w\mid W=w}\le \eta\) for each
retraining target \(w\). Before stating the examples, we first
describe some straightforward improvement over the general theorem by
applying a conditional version of the Fano's inequality. For \(q\ge2\),
define %
\[
    \beta_q \coloneqq \min\Bigl\{e^{\gamma\varepsilon}\eta^\gamma,\, 1-\tfrac{1}{q}\Bigr\},
    \qquad
    h(p)
    \coloneqq
    - p \log p - (1-p) \log (1-p).
\]

For the deterministic learners below, the recovery error is at most
\(e^{\gamma\varepsilon}\eta^\gamma\) for every dataset, so this bound
also holds conditional on \(\mathcal H_i=(W_{<i},U_{\le i})\). If
\(W_i\) has at most \(q_i\ge2\) possible values given this history,
any estimate outside that set can be mapped into it without increasing
the error.  Applying Fano's inequality conditional on \(\mathcal H_i\)
therefore gives the penalty \(h(\beta_{q_i})+\beta_{q_i}\log(q_i-1)\),
even when the full model space is infinite. If only one target is
possible, the conditional entropy is zero.

The propositions below state necessary conditions for the existence of
a forget-only unlearning algorithm satisfying the assumptions above.
Thus, if the usual learner output contains less information than the
displayed lower bound, the conclusion is that the usual output alone
cannot support such forget-only unlearning; a successful
implementation would have to store an additional state. All proofs are
deferred to~\Cref{app:example-proofs}.

\noindent\textbf{Canonical threshold ERM learner.}
\label{ex:canonical-thresholds}
Let \(\mathcal Z=\mathcal X\times\{0,1\}\), where \(\mathcal X=[N]\),
and consider \(\cH=\bc{h_a : h_a(x)=\mathds{1}\{x\ge a\},\; a\in[N+1]}.\) The canonical ERM tie-breaking rule returns the leftmost
point labeled \(1\), or \(N+1\) if there is no positive example.

\begin{proposition}[Canonical threshold ERM]
\label{pp:canonical-thresholds}
    Let \(n\ge 2m\), let \(q\ge2\), and set \(N=nq\). Then there
    exists a distribution \(\cP_S\) over realizable threshold samples
    of size \(n\), and there are deletion requests \(U_1,\ldots,U_m\),
    each of size exactly \(m\), such that any forget-only unlearning
    algorithm satisfying the assumptions above must obey
    \[
        I(M;S) 
        \geq
        m \bs{
            \log \frac{N}{n}
            -
            \beta_q
            \log \br{
                \frac{N}{n} - 1
            }
            -
            h(\beta_q)
        }.
    \]

    For the bare canonical ERM output on thresholds, one has
    \(I(M;S)_{\mathrm{bare}}=\log\br{N/n}\). %
\end{proposition}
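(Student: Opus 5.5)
We instantiate \Cref{thm:information-lb-discrete} in its conditional-Fano form, using the refined penalty $h(\beta_{q_i})+\beta_{q_i}\log(q_i-1)$ described above. So we need a distribution $\cP_S$ and requests $U_1,\dots,U_m$ such that, in the natural order $\pi=\mathrm{id}$, each retraining target $W_i$ has exactly $q=N/n$ possible values given $\mathcal H_i=(W_{<i},U_{\le i})$ and is uniform on them. That makes every entropy term equal to $\log(N/n)$ and every penalty equal to $h(\beta_q)+\beta_q\log(q-1)$, which is exactly the displayed bound.

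\emph{Construction.} Partition $[N]$ into $n$ consecutive blocks $B_1,\dots,B_n$ of length $q$. We draw $S$ as follows:
\begin{itemize}[leftmargin=1.5em]
\item For $j\le m$, draw one point uniformly from $B_j$ and label it $0$.
\item For $j=m+1,\dots,2m$, draw one point uniformly and independently from $B_j$ and label it $1$.
\item Fill the remaining $n-2m\ge0$ slots with fixed positive points placed at the right end of the last block.
\end{itemize}
Realizability holds because every negative lies left of every positive. The positive positions $X_{m+1}<\dots<X_{2m}$ are independent and uniform on their blocks.

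\emph{Requests.} Take $U_i$ to consist of the $i$ leftmost positives $X_{m+1},\dots,X_{m+i}$, padded with $m-i$ negatives so that $|U_i|=m$. This padding is possible since there are $m$ negatives. Deleting negatives does not change the canonical ERM output, so $W_i=\learning(S\setminus U_i)=X_{m+i+1}$ for $i<m$. For $i=m$, $W_m$ is the leftmost fixed filler point, which is deterministic, or $N+1$ if $n=2m$.

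This indexing is one short, and I expect fixing it to be the main technical obstacle. The remedy is to shift by one: let $U_i$ delete the first $i-1$ positives, padded with negatives, so that $W_i=X_{m+i}$ and $U_1$ contains only negatives. Then $W_1=X_{m+1}=M$. Its entropy $\log q$ must still be counted, and that is legitimate because $W_1$ is uniform on $B_{m+1}$ and $U_1$ carries no information about it.

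\emph{Uniformity check.} We need $H(W_i\mid W_{<i},U_{\le i})=\log q$. Here $U_{\le i}$ reveals $X_{m+1},\dots,X_{m+i-1}$ (which $W_{<i}$ also contains) and some negatives. All of these are independent of $X_{m+i}$, so $X_{m+i}$ remains uniform over its $q$ values and the conditional support size is $q_i=q$.

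\emph{Penalty.} Since the learner is deterministic, $\Delta_\learning=0$, and \Cref{lem:t-recovery-bound} gives recovery error at most $e^{\gamma\varepsilon}\eta^\gamma$. By the remark preceding the proposition, any estimate of $W_i$ can be projected onto the block $B_{m+i}$, so conditional Fano with $q_i=q$ yields penalty $h(\beta_q)+\beta_q\log(q-1)$. Summing over $i=1,\dots,m$ via \Cref{pp:information-lb} gives the stated inequality.

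\emph{Bare learner.} Here $M=X_{m+1}$ is a deterministic function of $S$, so $I(M;S)=H(M)=\log q=\log(N/n)$.
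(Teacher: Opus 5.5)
Your proposal is correct and follows essentially the same route as the paper. Both use bins of length $q$ with independent uniform points, requests that delete a growing prefix of the positives padded with negatives to size exactly $m$, targets that are each uniform on a fresh bin given the history, and the conditional-support Fano penalty with $q_i=q$. The only difference is bookkeeping: you use $m$ negatives and shift the indexing so that $U_1$ deletes only negatives and $W_1=M$, whereas the paper labels $y_j=1$ iff $j\ge m$ (so there are $m-1$ negatives), lets $U_i$ delete the first $i$ positives, and gets targets $x_{m+1},\ldots,x_{2m}$, all distinct from $M=h_{x_m}$.
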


\noindent\textbf{Coordinate PCA.} We consider uncentred PCA restricted
to output coordinate directions. Given a dataset \(S\), the learner
returns the basis vector \(e_j\) maximising \(\sum_{x\in S}x_j^2\).
Its output is therefore just a single coordinate index and requires
only \(O(\log d)\) bits to store. Below, we prove that forget-only
unlearning can require retaining substantially more information.

\begin{proposition}[Coordinate PCA]
\label{prop:coordinate-pca}
    Consider top-coordinate PCA in dimension \(d\). Let \(n\) be the
    dataset size and let \(m\) be the deletion budget. Assume \(d=nq\)
    for some integer \(q\ge2\), and assume \(2m\le n\). Then there
    exists a distribution \(\cP_S\) over datasets of size \(n\), and
    deletion requests \(U_1,\ldots,U_m\), each of size exactly \(m\),
    such that any forget-only unlearning algorithm satisfying the
    assumptions above must obey\[ I(M;S)
        \ge
        m\bs{
            \log\br{\frac dn}
            -
            \beta_q \log\br{\frac dn-1}
            -
            h(\beta_q)
        }.
    \]

    For the bare top-coordinate PCA output, one has
    \(I(M;S)_{\mathrm{bare}}=\log \br{d/n} \).
\end{proposition}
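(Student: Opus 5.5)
We prove \Cref{prop:coordinate-pca} by reducing it to the same block construction used for canonical thresholds (\Cref{pp:canonical-thresholds}) and then applying the conditional-Fano refinement of \Cref{thm:information-lb-discrete}.

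\textbf{Construction.} Partition the coordinates $[d]$ into $n$ disjoint blocks $B_1,\ldots,B_n$, each of size $q$. A dataset $S$ contains exactly one point per block: point $j$ is $x_j=c_j e_{a_j}$ with $a_j\in B_j$. Under $\cP_S$ the indices $a_j$ are independent and uniform on $B_j$, and the weights are fixed and strictly decreasing, $c_1>c_2>\cdots>c_n>0$. Because every coordinate is hit by at most one point, the learner returns $e_{a_j}$, where $j$ is the smallest index still present in the dataset. Hence $M=e_{a_1}$ is uniform over $q=d/n$ values, and $I(M;S)_{\mathrm{bare}}=H(M)=\log(d/n)$, since $M$ is a deterministic function of $S$.

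\textbf{Requests.} For $i=1,\ldots,m$ let $U_i=\{x_1,\ldots,x_i\}\cup\{x_{n-m+i+1},\ldots,x_n\}$. Each $U_i$ has size exactly $i+(m-i)=m$. The tail points are always the lowest-weight points, and $2m\le n$ makes the head $\{1,\dots,m\}$ and tail $\{n-m+2,\dots,n\}$ disjoint. The top remaining point after deleting $U_i$ is therefore $x_{i+1}$, so $W_i=e_{a_{i+1}}$.

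\textbf{Applying the bound.} Take the identity ordering. The requests $U_{\le i}$ are functions of $S$, but their contribution to the history is benign: conditioning on them reveals only which points are removed, not the unseen block index $a_{i+1}$. So we condition on the history $(W_{<i},U_{\le i})$, which amounts to conditioning on $a_2,\ldots,a_i$ together with positional information. If one worries that $U_i$, as a set of points, encodes the $a_j$ of its members, note that it contains $x_1,\ldots,x_i$ and tail points but never $x_{i+1}$, since $i+1\le m<n-m+i+1$. By independence of the blocks, $a_{i+1}$ is still uniform on $B_{i+1}$ given the history. Thus
\[
H\bigl(W_i\mid W_{<i},U_{\le i}\bigr)=\log q=\log(d/n),
\]
and $W_i$ takes at most $q$ values given the history.

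\textbf{Penalty.} The learner is deterministic, so $\Delta_{\learning}=0$ and \Cref{lem:t-recovery-bound} bounds the recovery error by $e^{\gamma\varepsilon}\eta^\gamma$ for every dataset, hence also conditionally on the history. Project any estimate $\overline W_i$ outside $\{e_a: a\in B_{i+1}\}$ into this set; this does not increase the error. The conditional Fano inequality then gives the penalty $h(\beta_q)+\beta_q\log(q-1)$.

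Summing the $m$ terms via \Cref{pp:information-lb} yields the stated bound.

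\textbf{Main obstacle.} The delicate step is verifying that conditioning on the request sets $U_{\le i}$ leaks nothing about $a_{i+1}$. The plan is to check that no $U_k$ with $k\le i$ contains $x_{i+1}$; this holds because the tail indices exceed $n-m+1\ge m+1$. Independence across blocks then keeps $a_{i+1}$ uniform.
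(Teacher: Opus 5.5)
Your proposal is correct and follows the paper's proof essentially step for step. It uses the same block construction with strictly decreasing weights (your $c_j$ play the role of the paper's $\sqrt{\lambda_j}$), the same requests $U_i=\{x_1,\ldots,x_i\}\cup\{x_{n-m+i+1},\ldots,x_n\}$ yielding $W_i=e_{a_{i+1}}$, the same block-independence argument for $H(W_i\mid W_{<i},U_{\le i})=\log q$, and the same conditional-support Fano penalty. The only blemish is the inequality $i+1\le m$ in your history paragraph, which fails at $i=m$. The fact you actually need, $i+1\le m+1<n-m+k+1$ for every $k\le i$, does hold because $n\ge 2m$, as your final paragraph correctly notes.
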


\noindent\textbf{Ridge regression.}
Consider ordinary ridge regression with a fixed penalty \(\lambda>0\):
\[
    \learning_{\mathrm{ridge}}(D)
    =
    \argmin_{w\in\reals^d}
    \bc{
        \frac12\sum_{(x,y)\in D}
        \br{\ip{w}{x}-y}^2
        +\frac{\lambda}{2}\abs{w}_2^2
    }.
\]
Consider the input domain to be one-hot vector. Each input represents
one of \(d\) categories. An example from category \(j\) is encoded by
the one-hot vector \(e_j\), whose \(j\)-th coordinate is \(1\) and all
other coordinates are \(0\).  The labels take values in
\(\bc{-1,+1}\).  Since \(\ip{w}{e_j}=w_j\), ridge regression
fits one coefficient for each category.

\begin{proposition}[Ridge regression]
\label{prop:ridge-regression}
    Fix an integer \(b\ge1\), set \(d=2b\) and \(n=8b\). There exists a distribution
    \(\cP_S\) over datasets~(as described above) of size \(n\), and
    singleton deletion requests \(U_1,\ldots,U_b\), such that any
    forget-only unlearning algorithm must obey
    \[
        I(M;S)
        \ge
        \frac{n}{8}
        \left[
            \log3-h(\beta_3)-\beta_3\log2
        \right].
    \]
    On the distribution \(\cP_S\), the bare ridge-regression output is
    identically zero, so \(I(M;S)_{\mathrm{bare}}=0\).
\end{proposition}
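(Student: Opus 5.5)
The plan is to build a product distribution over $b$ independent ``pairs'' of categories. In each pair the bare ridge output is pinned to zero, but deleting one fixed example reveals one of three equally likely retraining targets. The bound then follows from the conditional-Fano refinement of \Cref{thm:information-lb-discrete} with $q=3$.

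\textbf{Construction.} Group the $d=2b$ categories into pairs $(2j-1,2j)$ for $j\in[b]$, and give each pair exactly $8$ examples, so that $n=8b$. Independently for each $j$, draw $C_j$ uniformly from $\{2,4,6\}$. Place $C_j$ examples in category $2j-1$ and $8-C_j$ in category $2j$, with half of each category labeled $+1$ and half $-1$. Order the tuple $S$ canonically, for example by category and then by label.

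\textbf{Bare output.} Ridge with one-hot inputs decouples across coordinates: $w_k=\sum_{(e_k,y)\in D}y/(c_k+\lambda)$, where $c_k$ is the count in category $k$. Every category is label-balanced, so $M\equiv 0$ deterministically. Hence $I(M;S)_{\mathrm{bare}}=0$.

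\textbf{Requests and targets.} Let $U_j$ be the singleton $\{(e_{2j-1},+1)\}$. This example always exists since $C_j\ge 2$, and $U_j$ is a fixed function of $S$ (indeed a constant), so conditioning on $U_{\le j}$ carries no information. After deletion, only coordinate $2j-1$ changes:
\[
W_j=-\tfrac{1}{C_j-1+\lambda}\,e_{2j-1}.
\]
This takes three distinct values as $C_j$ ranges over $\{2,4,6\}$, and $W_j$ is a bijective function of $C_j$. Because the $C_j$ are independent and $W_{<j}$ is a function of $C_{<j}$, we get
\[
H(W_j\mid W_{<j},U_{\le j})=H(C_j)=\log 3
\]
for every $j$, using the identity ordering $\pi$.

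\textbf{Penalty.} The learner is deterministic, so $\Delta_{\learning}=0$, and \Cref{lem:t-recovery-bound} gives recovery error at most $e^{\gamma\varepsilon}\eta^\gamma$ for every $(s,u)$. This uniform bound also holds conditionally on the history. Given the history, $W_j$ has at most $q_j=3$ possible values, so by the conditional Fano argument described before the examples, each term in \Cref{pp:information-lb} is at least $\log3-h(\beta_3)-\beta_3\log 2$. Summing over $K=b=n/8$ requests yields the claimed bound.

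\textbf{Main obstacle.} The delicate step is making the conditional Fano reduction rigorous. Estimates $\overline W_j$ outside the three-element support must be projected into it without increasing the error, and the recovery-error bound must be verified conditionally on $(W_{<j},U_{\le j})$. Both should follow because the recovery-error bound is a supremum over $(s,u)$. Everything else is direct computation from the closed-form ridge solution.
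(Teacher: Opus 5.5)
Your proposal is correct and is essentially the paper's proof. The block construction is the same: your total count $C_j\in\{2,4,6\}$ in category $2j-1$ is the paper's $C_j\in\{1,2,3\}$ copies of each label. The singleton requests $\{(e_{2j-1},+1)\}$ are the same and give three distinct targets $-e_{2j-1}/(C_j-1+\lambda)$, and you finish with the same conditional-support Fano step with $q=3$.

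One small point on ordering. The paper places a copy of $(e_{2j-1},+1)$ first in each block, so the request is fixed in both index and value. With your ``by category, then by label'' ordering, either put label $+1$ before $-1$, or identify requests by value (the unlearner receives $U$ as a tuple of data points). Otherwise an index-based $U_j$ would depend on $C_j$, and conditioning on $U_{\le j}$ would collapse the $\log 3$ entropy term.
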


Thus, whenever \(\beta_3<2/3\), the ordinary fitted
coefficients alone cannot support the required guarantee, even
for single-example deletions.

\noindent\textbf{Row-wise factorized affine matrix completion.} Fix
\(r\) and view the first \(r^2\) coordinates as an \(r\times r\)
matrix \(X\) with rows \(x_i\in\reals^r\). A data point
\(\br{i,a,y}\), with \(i\in[r]\), \(a\in\reals^r\), and
\(y\in\reals\), imposes the row-wise affine constraint \(
\ip{a}{x_i}=y.\) Thus \(\br{i,\bf1_r,r}\) imposes the row-sum
constraint, and \(\br{i,e_\ell,1}\) imposes \(\br{x_i}_\ell=1\). The
learner uses a row-wise factorization \(x_i=u_i v_i\), with
\(u_i\in\reals\) and \(v_i\in\reals^r\), and minimizes
\(
    \frac12\sum_{\br{i,a,y}\in S}
    \br{\ip{a}{u_i v_i}-y}^2
    +
    \frac{\lambda}{2}
    \sum_{i=1}^r
    \br{u_i^2+\norm{v_i}_2^2}.
\)
The algorithm \(\learning_{\mathrm{fac}}\) returns the fitted matrix
in the limit \(\lambda\to0\). On the consistent datasets below, this
limit is the interpolating matrix minimizing \(\sum_{i=1}^r
\|x_i\|_2,\) because
\(
    \inf_{u_i v_i=x_i}
    \frac12(u_i^2+\|v_i\|_2^2)
    =
    \|x_i\|_2.
\)

This is best viewed as a row-wise affine matrix-completion problem
with a factorized norm bias. The construction below shows that
forget-only unlearning may force the trained object to store a hidden
permutation even though the ordinary fitted matrix on the full sample
is deterministic.

\begin{proposition}[Row-wise factorized affine matrix completion]
\label{prop:factorized-matrix-completion}
    Let \(d>4\) be the ambient number of scalar matrix entries, let
    \(m\ge1\) be the deletion budget, and let \(n>3m\) be the dataset size.
    Define \(
        r_\star
        \coloneqq
        \min\bc{ \left\lfloor\sqrt d\right\rfloor,
            \left\lfloor\frac{n}{m+1}\right\rfloor}.  \) Then there
            exists a distribution \(\cP_S\) over datasets of size \(n\),
            and deletion requests \(U_1,\ldots,U_{r_\star}\), each of
            size exactly \(m\), such that any forget-only unlearning
            algorithm satisfying the assumptions above must obey
    \[
        I(M;S)
        \ge
        \log(r_\star!)
        -
        \sum_{s=2}^{r_\star}
        \bs{
            h(\beta_s) + \beta_s \log(s-1)}.
    \]
    On \(\cP_S\), the basic fitted matrix is deterministic and thus
    \(I(M;S)_{\mathrm{bare}}=0\) when no unlearning is required.
\end{proposition}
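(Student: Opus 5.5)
The plan is to embed a uniformly random hidden permutation \(\sigma\) of \([r]\) in the data, with \(r\coloneqq r_\star\). The full-sample fit should not depend on \(\sigma\), while each deletion request \(U_i\) reveals \(\sigma(i)\). We then apply \Cref{pp:information-lb} with the conditional-Fano refinement described before \Cref{pp:canonical-thresholds}. Since \(r\le\lfloor\sqrt d\rfloor\), the \(r\times r\) matrix fits into the first \(r^2\le d\) coordinates.

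\textbf{Construction of \(\cP_S\).} Draw \(\sigma\) uniformly from the permutations of \([r]\). For each row \(i\in[r]\), include \(m\) copies of the row-sum constraint \((i,\mathbf 1_r,r)\) and one copy of \((i,e_{\sigma(i)},1)\). This uses \(r(m+1)\le n\) points, since \(r\le\lfloor n/(m+1)\rfloor\). Pad to size \(n\) with further copies of the constraints \((i,e_{\sigma(i)},1)\); duplicates do not change the feasible set. The objective \(\sum_i\|x_i\|_2\) separates across rows, so each row is solved independently.

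\textbf{Full-sample and retrained fits.} On the full sample, row \(i\) must satisfy \(\langle\mathbf 1,x_i\rangle=r\) and \((x_i)_{\sigma(i)}=1\). The minimum-norm point under the sum constraint alone is \(\mathbf 1_r\), and it also satisfies the second constraint, so it is the unique minimizer. Hence \(M\) is the all-ones matrix, and in particular \(I(M;S)_{\mathrm{bare}}=0\). Let \(U_i\) be the \(m\) row-sum copies for row \(i\); it is a deterministic function of \(S\) and has size exactly \(m\). After deleting \(U_i\), row \(i\) is constrained only by \((x_i)_{\sigma(i)}=1\), so its unique minimum-norm solution is \(e_{\sigma(i)}\). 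The other rows stay \(\mathbf 1_r\). Thus \(W_i\) is a bijective encoding of \(\sigma(i)\).

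\textbf{Counting.} Take \(\pi\) to be the identity. Given \(W_{<i}\), we know \(\sigma(1),\ldots,\sigma(i-1)\), and the \(U_{\le i}\) are deterministic given the row structure. So \(W_i\) is uniform over \(r-i+1\) remaining values and
\[
H(W_i\mid W_{<i},U_{\le i})=\log(r-i+1),
\]
which sums to \(\log(r!)\). For each term, \(I(W_i;\overline W_i\mid\mathcal H_i)\ge H(W_i\mid\mathcal H_i)-H(W_i\mid\overline W_i,\mathcal H_i)\). Because the learner is deterministic, the recovery error is at most \(e^{\gamma\varepsilon}\eta^\gamma\) conditional on \(\mathcal H_i\). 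Given \(\mathcal H_i\), \(W_i\) ranges over \(s=r-i+1\) values, so conditional Fano, after projecting \(\overline W_i\) onto the admissible set, bounds the loss by \(h(\beta_s)+\beta_s\log(s-1)\). For \(s=1\) the target is determined and the term vanishes. Reindexing by \(s\) gives the stated sum over \(s=2,\ldots,r_\star\).

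\textbf{Main obstacle.} The delicate step is justifying that \(\learning_{\mathrm{fac}}\), defined as the \(\lambda\to0\) limit, really returns these unique minimum-\(\sum_i\|x_i\|_2\) interpolants. This needs the variational identity already stated in the paper, plus uniqueness, which holds because each row problem minimizes a strictly convex norm over an affine set. It also needs the fact that regularized minimizers converge to the minimum-regularizer interpolant on consistent data, which follows from a standard \(\Gamma\)-convergence argument applied row by row. The remaining hypotheses \(n>3m\) and \(d>4\) only ensure \(r_\star\ge2\), so that the bound is nonvacuous.
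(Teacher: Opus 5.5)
Your proposal is correct and follows the paper's proof essentially step for step. Both use a uniformly random permutation \(\sigma\), \(m\) row-sum copies plus one coordinate constraint per row, \(U_i\) equal to the row-\(i\) row-sum copies, the retrained row \(e_{\sigma(i)}\), conditional entropies \(\log(r-i+1)\), and the conditional-support Fano penalty with \(s=r-i+1\). The remaining differences are minor: you pad with duplicated coordinate constraints instead of the paper's vacuous measurements \((1,0_r,0)\), you get the full-sample fit as the minimum-norm point of the row-sum hyperplane rather than via Cauchy--Schwarz on the other coordinates, and you justify the \(\lambda\to0\) limit, which the paper simply asserts.
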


\begin{remark}
    We would like to highlight a consequence of our lower bounds, which can hold for a variety of algorithms. For instance, in the case of the canonical threshold ERM learner, the algorithm is simply outputting a real number - the threshold - from a finite set of possible thresholds. Unlearning, on the other hand, imposes the requirement to retain significantly more information than just this single number. However, there is no way to encode all of this additional information into that one number while remaining within the original model space. This implies that we may need to move to a different output space in order to output this additional information, and possibly design a different algorithm for the unlearning problem.
\end{remark}

\section{Related Work and Discussion}\label{sec:discussion}

Machine unlearning asks for a cheaper alternative to retraining whose
output behaves like a model trained from scratch without the deleted
data~\citep{cao2015towards,guo2019certified}. We study the stricter
forget-only setting: at deletion time, the unlearner receives only the
trained model \(M=A(S)\) and the forget set \(U\), not the retained
set \(S\setminus U\), training logs, checkpoints, gradients, or
auxiliary statistics. Several empirical methods are forget-only, or
close to forget-only, in this operational sense. Some zero-shot
unlearning methods update the trained model using only the forget
examples~\citep{chundawat2023zero,foster2024zeroshot}. Related
single-point analyses of SGD show that approximate unlearning
guarantee depends on the stability of the training
dynamics~\citep{thudi2021unrolling}. Recent work has also developed nearly forget-only unlearning methods
for language models. However, these works use notions of unlearning that differ
from ours; we discuss them, together with other empirical evaluations,
in~\Cref{sec:app-related}. The empirical literature shows that
forget-only and retain-free unlearning can be useful, but our theory
explains why this does not imply universal feasibility. Existing
successes typically rely on at least one of a few common forms of structure:
the request is benign, the model is locally stable, retain information
is approximated by a proxy, or the unlearning definition is weaker
than retraining-style indistinguishability.
 This is consistent with our results: the lower bounds do
not rule out forget-only unlearning, but they require the retraining
target to be recoverable from the information available in \((M,U)\).

A related line removes access to the original retain data but
reconstructs a proxy. Source-free and retain-free methods synthesize
adversarial, energy-guided, or generator-based proxy data to preserve
utility while unlearning~\citep{mu2024rewind,chen2025zspag,wang2025dsda,
lee2025ruago}. Other works obtain unlearning guarantees using a
surrogate distribution or model-side curvature information~\citep{
basaran2025certified,ahmed2025sourcefree}. Several theoretical works
allow the unlearner to use training summaries, gradients, or other
auxiliary local information in order to obtain certified unlearning
algorithms~\citep{ginart2019making,neel2021descent,
sekhari2021RememberWhatYou,heinzler2026less}. These methods are close
in motivation to ours, but the missing retain information is replaced
by a proxy, a surrogate, a local approximation, or a state deliberately
preserved during training. Our memorization lower bound intuitively
captures the same intuition: the trained model must memorize enough
recoverable information to reconstruct the retraining target.

The closest theoretical comparison is the work of
\citet{cherapanamjeri2025space}, who lower bound the memory complexity
of learning--unlearning algorithms in terms of the \emph{eluder dimension}.
Their result is complementary to ours: it studies realizability testing,
with extensions to ERM using exact minimizers as the retraining reference,
and is governed by the hypothesis class, whereas
our bounds are learner-dependent and unlearning request-dependent, and
apply directly to strict forget-only unlearning without auxiliary
information for any \(\br{\alpha,\varepsilon}\). In particular, their
results do not distinguish between learning algorithms with the same
hypothesis class, and do not directly cover tasks such as regression,
PCA, and factorization.

Our lower bounds quantify how much recoverable information must be
memorized in the trained model for forget-only unlearning to
succeed, but they do not identify the form that this information should
take. In particular, they do not specify which features or statistics should be memorized, nor how such information should be stored. This leaves open a natural question: whether the memorization already observed in overparameterized models can
be accessed in a way that supports certified forget-only
unlearning.  Finally, instantiating these lower bounds for contemporary architectures
and large-scale learning pipelines remains an important direction for
understanding when empirical failures of forget-only unlearning are
algorithmic and when they are information-theoretic.

\clearpage

\subsection*{Acknowledgments}
LR and AS acknowledge support from VILLUM FONDEN via the Young
Investigator program (72069). AS and VS also acknowledge the Novo
Nordisk Foundation for support via the Startup grant (NNF24OC0087820).

\bibliography{bibliography}
\bibliographystyle{plainnat}

\appendix
\clearpage
\section{Additional Related Works}
\label{sec:app-related}

Recent work on language-model unlearning has produced several
forget-only, or nearly forget-only, procedures.  Early work on
knowledge unlearning uses gradient-ascent or unlikelihood objectives
on target sequences to reduce memorization
risk~\citep{jang2023knowledge}.  \citet{yao2023large} view LLM
unlearning as a post-hoc alignment operation driven by negative
examples, while \citet{eldan2023whos} fine-tune a large model so that
it stops recalling Harry Potter content while largely preserving
standard benchmark performance.  More recent methods refine this
optimization perspective: \citet{zhang2024npo} introduce negative
preference optimization to avoid the catastrophic collapse often
caused by plain gradient ascent, and \citet{wang2025flat} propose
FLAT, a forget-data-only objective that uses neither retain data nor a
reference model.  These works optimize empirical forgetting criteria
rather than the stronger indistinguishability requirement studied
here. They therefore leave open whether weaker, task-level notions of
forgetting can be achieved with substantially less memorization.

Another line of work shows that the difficulty of unlearning depends
on the deletion distribution and on the evaluation criterion.
\citet{allouah2025utility} distinguish in-distribution from
out-of-distribution deletion requests: simple certified procedures can
achieve tight tradeoffs for in-distribution deletions, whereas
out-of-distribution deletions can be harder than retraining in some
regimes.  For language models, benchmarks such as TOFU and MUSE show
that forgetting synthetic biographies, books, or news articles is
already difficult under empirical
metrics~\citep{maini2024tofu,shi2025muse}.  \citet{thaker2025weak}
argue further that the usual split into independent forget and retain
queries can overstate progress, since realistic queries may couple
forgotten and retained information.  Representation-based evaluations
lead to a similar conclusion: methods that appear successful under
logit- or accuracy-based metrics may leave internal representations
close to those of the original model, or may destroy the
representation altogether~\citep{kim2026truly}.  These empirical
observations are consistent with our lower bound: unlearning becomes
harder when the retraining target is more sensitive to retained data,
and when the criterion used to compare against retraining is more
stringent. A related distinction between deletion-specific and
distribution-free sensitivity is made in the certified setting by
\citet{heinzler2026less}.

\section{Supporting Lemmas}
\label{sec:supporting-lemmas}

\begin{lemma}[Event transfer]\label{lem:renyi-event-transfer}
Let \(\alpha>1\), let \(\gamma=\frac{\alpha-1}{\alpha},\) and suppose
\(D_\alpha(P\|Q)\le \varepsilon\). Then, for every measurable event
\(E\),
\(
    P(E)
    \le
    e^{\gamma\varepsilon} Q(E)^\gamma .
\)
\end{lemma}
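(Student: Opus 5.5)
The plan is to prove the inequality through the likelihood ratio and Hölder's inequality. If \(P\) is not absolutely continuous with respect to \(Q\), then \(D_\alpha(P\|Q)=\infty\), which contradicts the hypothesis. So I assume \(P\ll Q\) and write \(f\coloneqq dP/dQ\).

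Then
\[
    P(E)=\int \mathds{1}_E\, f\, dQ .
\]
I apply Hölder's inequality with respect to \(Q\), using the conjugate exponents \(\alpha\) and \(\alpha'=\alpha/(\alpha-1)=1/\gamma\):
\[
    P(E)\le \Bigl(\int f^\alpha\, dQ\Bigr)^{1/\alpha}\Bigl(\int \mathds{1}_E\, dQ\Bigr)^{(\alpha-1)/\alpha}
    = \Bigl(\int f^\alpha\, dQ\Bigr)^{1/\alpha} Q(E)^{\gamma}.
\]

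It remains to control the first factor. The definition of \(D_\alpha\) gives
\[
    \int f^\alpha\, dQ = \exp\bigl((\alpha-1)D_\alpha(P\|Q)\bigr)\le e^{(\alpha-1)\varepsilon}.
\]
Raising this to the power \(1/\alpha\) gives \(e^{\gamma\varepsilon}\), and the claim follows. Since the inequality \(f\mathds{1}_E\le\dots\) is pointwise and Hölder applies to nonnegative functions, nothing needs to be integrable a priori. If \(\int f^\alpha\, dQ=\infty\), then the divergence is infinite, again contradicting the hypothesis.

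No step is a real obstacle. The only care needed is to check that the exponent bookkeeping matches, namely that \((\alpha-1)/\alpha=\gamma\) appears both on \(Q(E)\) and in the prefactor. An alternative route would be data processing onto the partition \(\{E,E^c\}\) followed by dropping the \(E^c\) term. That gives \(P(E)^\alpha Q(E)^{1-\alpha}\le e^{(\alpha-1)\varepsilon}\), which rearranges to the same bound, so I could use it as a cross-check.
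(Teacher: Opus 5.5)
Your proposal is correct and follows essentially the same route as the paper: write $P(E)$ as the $Q$-expectation of $\frac{dP}{dQ}\,\mathds{1}_E$, apply H\"older with exponents $\alpha$ and $\alpha/(\alpha-1)$, and use the hypothesis to bound $\int (dP/dQ)^\alpha\, dQ \le e^{(\alpha-1)\varepsilon}$. The paper only adds a preliminary step, writing the divergence through densities with respect to the dominating measure $\tfrac12(P+Q)$ before passing to the likelihood ratio, which changes nothing substantive.
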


\begin{proof}[Proof of~\Cref{lem:renyi-event-transfer}]
    Let \(\mu\) be a common dominating measure for \(P\) and \(Q\), meaning that \(P\) and \(Q\) are both absolutely continuous with respect to \(\mu\): if \(\mu(A)=0\), then \(P(A)=Q(A)=0\). Such a common dominating measure always exists; for example, one may take \(\mu = 1/2(P+Q)\). Define
    \[
    p \coloneqq \frac{dP}{d\mu},
    \qquad
    q \coloneqq \frac{dQ}{d\mu}.
    \]
    The Rényi divergence can be written as,
    \begin{equation}
        \renyi{\dvg{P}{Q}} = \frac{1}{\alpha-1} \log \int p(x)^\alpha q(x)^{1-\alpha} d\mu(x).
    \end{equation}
    Define the Radon-Nikodym derivative of $P$ w.r.t. $Q$ as,
    \begin{equation}
        L(x) \coloneqq \frac{dP}{dQ} (x) = \frac{p(x)}{q(x)}.
    \end{equation}
    Then we have,
    \begin{equation}
        \int L(x)^\alpha dQ(x) 
        = \int \left( \frac{p(x)}{q(x)} \right)^\alpha q(x) d\mu(x)
        = \int p(x)^\alpha q(x)^{1 - \alpha} d \mu(x),
    \end{equation}
    which implies that,
    \begin{gather}
    \begin{aligned}
        \renyi{\dvg{P}{Q}} 
        = \frac{1}{\alpha-1} & \log \int L(x)^\alpha dQ(x)
        = \frac{1}{\alpha - 1} \log \expp_Q (L^\alpha)
        \leq \varepsilon \\
        & \implies \expp_Q (L^\alpha) \leq e^{(\alpha-1)\varepsilon}.
    \end{aligned}
    \end{gather}
    For any event $A$, we have,
    \begin{equation}
        P(A) = \int_A dP(x) = \int_A L(x) \; dQ(x) = \expp_Q (L \; \mathds{1}_A).
    \end{equation}
    Using  H\"{o}lder's inequality with $m=\alpha$ and $n=\frac{\alpha}{\alpha-1}$ ($\frac{1}{m} + \frac{1}{n} = 1$), we finally get, 
    \begin{gather}
    \begin{aligned}
        P(A)
        &= \expp_Q (L \; \mathds{1}_A) 
        \leq (\expp_Q L^m)^{1/m} \; (\expp_Q \mathds{1}_A^n)^{1/n} \\
        &= (\expp_Q L^\alpha)^{1/\alpha} \; (\expp_Q \mathds{1}_A)^{(\alpha-1)/\alpha}
        \leq e^{(\alpha-1)/\alpha \; \varepsilon} \; Q(A)^{(\alpha-1)/\alpha}.
    \end{aligned}
    \end{gather}
    This completes the proof.
\end{proof}

\section{Deferred Proofs from Section~\ref{sec:impossibility}}
\label{sec:impossibility-proofs}

\begin{lemma}
\label{lem:renyi-from-concentration}
    Suppose \(\unlearning\) has
    \(\br{\phi,\Gamma}\)-utility on empty requests. 
    Consider two outputs of $\learning$: $W_1, W_2$;
    with laws $P_1, P_2$. 
    Fix two disjoint sets $E_1, E_2$ with
    \begin{equation}
    \label{eq:dist-condition}
        \br{
            \forall z_1 \in E_1
        }
        \br{
            \forall z_2 \in E_2
        }
        \quad
        \phi( z_1 - z_2 ) > 2r,
    \end{equation}
    such that
    \(
        \prob_{W_i \sim P_i} \br{
            W_i \in E_i
        }
        \geq 1 - \kappa
    \)
    for $i \in \bc{1,2}$.
    Define the sets 
    \[
        B_i \coloneqq \bc{
            w \in \cW: \;
            \br{
                \exists z \in E_i
            }\quad
            \phi(w - z) \leq r
        },
    \]
    and 
    $Q_i \coloneqq \law \br{\unlearning(W_i, \emptyset)}$ for $i \in \bc{1,2}$. Then, it holds that $B_1 \cap B_2 = \emptyset$, and,
    \[
        Q_i (B_i) \geq 
        1 - \kappa - \Gamma(r).
    \]
\end{lemma}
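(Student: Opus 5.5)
The argument has two parts: disjointness of the $r$-neighbourhoods via the triangle inequality for $\phi$, and the mass bound via a union bound after conditioning on $W_i$.

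\textbf{Disjointness.} Suppose, for contradiction, that some $w\in B_1\cap B_2$. By definition of the $B_i$ there exist $z_1\in E_1$ and $z_2\in E_2$ with $\phi(w-z_1)\le r$ and $\phi(w-z_2)\le r$. Since $\phi$ is a norm,
\[
\phi(z_1-z_2)\le\phi(z_1-w)+\phi(w-z_2)\le 2r .
\]
This contradicts \eqref{eq:dist-condition}, so $B_1\cap B_2=\emptyset$.

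\textbf{Mass bound.} Fix $i$ and draw $W^\emptyset=\unlearning(W_i,\emptyset)$, so that $W^\emptyset\sim Q_i$.

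1. On the event $\{W_i\in E_i\}\cap\{\phi(W^\emptyset-W_i)\le r\}$, the point $z=W_i\in E_i$ witnesses $W^\emptyset\in B_i$.
2. Hence $Q_i(B_i^c)\le \prob(W_i\notin E_i)+\prob\bigl(\phi(W^\emptyset-W_i)>r\bigr)$.
3. The first term is at most $\kappa$ by hypothesis.
4. For the second term, condition on $W_i=w$. Then $W^\emptyset\sim Q_w$, and Definition~\ref{def:gamma-utility} gives $\prob[\phi(W^\emptyset-w)>r\mid W_i=w]\le\Gamma(r)$ for every $w$. Averaging over $w\sim P_i$ bounds the second term by $\Gamma(r)$.
5. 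Combining, $Q_i(B_i)\ge 1-\kappa-\Gamma(r)$.

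The only point needing care is step~4. It requires the unlearner's internal randomness to be independent of $W_i$ given its input, so that the conditional law of $W^\emptyset$ given $W_i=w$ is exactly $Q_w$. This holds because $\unlearning(\cdot,\emptyset)$ is a Markov kernel applied to $W_i$.

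A minor measurability issue also arises for $B_i$. If $B_i$ is not measurable, one can work with inner measure, or note that $\{W^\emptyset\in B_i\}$ contains a measurable event whose probability is already bounded below by $1-\kappa-\Gamma(r)$. In the applications $\cW$ is a normed space, so I would simply note this and not belabour it.
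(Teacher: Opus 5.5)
Your proof is correct and follows the paper's argument. Disjointness comes from the triangle inequality applied to witnesses $z_1\in E_1$, $z_2\in E_2$; the paper's detour through $r+\varepsilon'$ is unnecessary, since $B_i$ is defined with an exact witness. The mass bound comes from conditioning on $W_i=w$ and applying the empty-request utility pointwise; the only cosmetic difference is that the paper integrates $Q_w(B_i)\ge 1-\Gamma(r)$ over $w\in E_i$ to get the slightly sharper $(1-\kappa)(1-\Gamma(r))$ before relaxing, whereas you reach $1-\kappa-\Gamma(r)$ directly by a union bound.
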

\begin{proof}
    First, note that $B_1$ and $B_2$ are disjoint. Suppose $y\in B_1\cap B_2$. Then for every $\epsilon'>0$ there are $w_i\in E_i$ with $\phi(y-w_i)\le r+\epsilon'$. Hence $\phi(w_1-w_2)\le2r+2\epsilon'$, which contradicts~\eqref{eq:dist-condition} once $\epsilon'$ is small enough.

    For $w \in E_i$, the ball
    \(
        \bc{
            y: \phi(y - w) \leq r
        }
    \)
    is contained in $B_i$. By~\Cref{def:gamma-utility}, it holds that
    \(
        Q_w (B_i) \geq 1-\Gamma(r)
    \)
    for $i \in \bc{1,2}$.
    Integrating over $w \sim P_i$ and keeping only $w \in E_i$, we have,
    \[
    \begin{aligned}
        Q_i(B_i) 
        &\geq
        \int_{E_i} Q_w(B_i) \; P_i(dw)
        \geq \br{
            1 - \Gamma(r)
        }
        \int_{E_i}
        P_i(dw) \\
        &=
        \br{
            1 - \Gamma (r)
        }
        P_i(E_i)
        \geq
        \br{
            1 - \Gamma(r)
        }
        \br{
            1 - \kappa
        }
        \geq
        1 - \kappa - \Gamma(r).
    \end{aligned}
    \]
\end{proof}

\begin{proof}[Proof of \Cref{thm:concentration-impossibility}]
    \label{app:proof-concentration-impossibility}
    For $i \in [N]$, write 
    $M_i \coloneqq \learning(S_i)$,
    $W_i \coloneqq \learning(S_i \setminus U)$,
    $P_i \coloneqq \law(W_i)$,
    $Q_i \coloneqq \law (\unlearning(W_i, \emptyset))$,
    and $T_i \coloneqq \law(\unlearning(M_i, U))$.
    Define
    \(
        B_i \coloneqq \bc{
            w \in \cW: \;
            \br{
                \exists z \in E_i
            }\quad
            \phi(w - z) \leq r
        },
    \)
    and $\gamma = \frac{\alpha-1}{\alpha}$.
    
    Recall that $u(r) \coloneqq \kappa + \Gamma(r)$, and fix $r \in R$, so $0 < r < \Delta/2$ and $0 < u(r) < 1$.
    For each pair $i \neq j$, the sets $E_i, E_j$ from the
    shared deletion example satisfy
    \[
        \br{
            \forall z_i \in E_i
        }
        \br{
            \forall z_j \in E_j
        }
        \quad
        \phi( z_i - z_j ) \geq \Delta > 2r,
    \]
    and
    \(
        \prob_{W_k \sim P_k} \br{W_k \in E_k} \geq 1 - \kappa
    \)
    for $k \in \bc{i, j}$.
    Applying~\Cref{lem:renyi-from-concentration} to the laws $P_i, P_j$ and the
    sets $E_i, E_j$, we get that $B_i \cap B_j = \emptyset$ and
    $Q_i(B_i) \geq 1 - u(r)$, implying $Q_i(B_i^c) \leq u(r)$.
    Since this holds for every pair $i \neq j$, the sets $B_1, \dots, B_N$ are
    pairwise disjoint.

    The unlearning assumption gives $d_\alpha (Q_i, T_i) \leq \varepsilon$ for all $i \in [N]$.
    As $T_i$ is the image of $M_i$ under the map $w \to \unlearning(w, U)$, by the data processing inequality, we have that, for $i \neq j$,
    \[
        d_\alpha(T_i, T_j) \leq d_\alpha(M_i, M_j) \leq \delta_\alpha.
    \]
    Specifically, we will use $d_\alpha(T_1, T_i) \leq \delta_\alpha$.

    As the sets $B_1, \dots, B_N$ are disjoint, it holds that,
    \(
        \sum_{i=1}^N T_1(B_i) \leq 1.
    \)

    We first analyze the exact collision case, which also covers the deterministic learners. The zero-collision condition forces $T_i = T_1$, which implies $d_\alpha(Q_i, T_1) \leq \varepsilon$ for all $i \in [N]$.
    Using~\Cref{lem:renyi-event-transfer} in two directions, together with the unlearning assumption, we have,
    \[
    \begin{aligned}
        1 - u(r) \leq Q_i(B_i) \leq e^{\gamma \varepsilon} T_1(B_i)^\gamma
        &\implies
        T_1(B_i) \geq e^{-\varepsilon} \br{1 - u(r)}^{1 / \gamma}, \\
        1 - T_1(B_i) = T_1(B_i^c) \leq e^{\gamma \varepsilon} Q_i(B_i^c)^\gamma \leq e^{\gamma \varepsilon} u(r)^\gamma
        &\implies
        T_1(B_i) \geq 1 - e^{\gamma \varepsilon} u(r)^\gamma
    \end{aligned}
    \]
    Summing each lower bound over $i$ and using
    \(
        \sum_{i=1}^N T_1(B_i) \leq 1,
    \)
    we get the final two terms depending on $r$.

    In the approximate collision case, we use~\Cref{lem:renyi-event-transfer} twice to get,
    \[
    \begin{aligned}
        &1 - u(r) 
        \leq Q_i(B_i) 
        \leq e^{\gamma \varepsilon} T_i(B_i)^\gamma
        \leq e^{\gamma \varepsilon + \gamma^2 \delta_\alpha} T_1(B_i)^{\gamma^2}
        \implies
        T_1 (B_i) \geq \br{
            1 - u(r)}^{1 / \gamma^2}
         e^{-\varepsilon/\gamma - \delta_\alpha}, \\
        &T_1(B_i^c) 
        \leq e^{\gamma \delta_\alpha} T_i(B_i^c)^\gamma
        \leq e^{\gamma^2 \varepsilon + \gamma \delta_\alpha}
        Q_i(B_i^c)^{\gamma^2}
        \leq e^{\gamma^2 \varepsilon + \gamma \delta_\alpha} u(r)^{\gamma^2}
        \implies
        T_1(B_i) \geq 1 - e^{\gamma^2 \varepsilon + \gamma \delta_\alpha} u(r)^{\gamma^2},
    \end{aligned}
    \]
    Here we used that $x \mapsto x^\gamma$ is increasing. Summing the first lower bound over the disjoint sets $B_i$ gives
    \[
        1 \geq \sum_{i=1}^N T_1(B_i)
        \geq N \br{1-u(r)}^{1/\gamma^2}
        e^{-\varepsilon/\gamma-\delta_\alpha}.
    \]
    Taking logarithms and rearranging,
    \[
    \begin{aligned}
        \frac{\varepsilon}{\gamma}
        &\geq \log N - \delta_\alpha
        + \frac{1}{\gamma^2}\log(1-u(r)), \\
        \varepsilon
        &\geq \gamma\br{\log N-\delta_\alpha}
        + \frac{1}{\gamma}\log(1-u(r)).
    \end{aligned}
    \]
    Similarly, summing the second lower bound gives
    \[
        1 \geq \sum_{i=1}^N T_1(B_i)
        \geq N\br{1-e^{\gamma^2\varepsilon+\gamma\delta_\alpha}u(r)^{\gamma^2}},
    \]
    hence
    \[
        e^{\gamma^2\varepsilon+\gamma\delta_\alpha}u(r)^{\gamma^2}
        \geq \frac{N-1}{N}.
    \]
    Taking logarithms and dividing by $\gamma^2$ yields
    \[
        \varepsilon \geq
        \frac{1}{\gamma^2}\log\frac{N-1}{N}
        -\frac{\delta_\alpha}{\gamma}-\log u(r).
    \]

    In both cases, taking the supremum over $\bc{r \in (0, \Delta/2): 0 < u(r) < 1}$ gives the final bounds.

\end{proof}

\subsection{Teaching Dimension}
\label{subsec:teaching-dimension}

\begin{definition}[Teaching dimension~\citep{goldman1995complexity}]
\label{def:teaching-dim}
    Let $\cX$ be an instance space, and let $\cH \subseteq \bc{0,1}^{\cX}$ be a hypothesis class over $\cX$.
    For a hypothesis $h \in \cH$, a set $T \subseteq \cX \times \bc{0,1}$ is called a \textit{teaching set} for $h$ if $h$ is consistent with $T$ and no other hypothesis $h' \in \cH \setminus \{h\}$ is consistent with $T$.
    Let $\mathcal{T}(h)$ denote the collection of all teaching sets for $h$.
    The \textit{teaching dimension} of $\cH$ is defined as
    \[
        \mathrm{TD}(\cH)
        \coloneqq
        \max_{h \in \cH}
        \min_{T \in \mathcal{T}(h)}
        |T|.
    \]
\end{definition}

\begin{proof}[Proof of~\Cref{pp:teaching-dimension}]
    Let $h_0 \in \cH$ have a teaching set $T_0$. A teaching set size is upper-bounded by the teaching dimension, hence $|T_0| \leq \tau$. Similarly, let $R_i$ be teaching sets for $h_i$, with $|R_i| \leq \tau$ as well.
    Construct the common unlearning request as $\tau+1$ copies of $T_0$: $U = \uplus_{i=1}^{\tau+1} T_0$, so
    $|U| \leq \tau (\tau + 1)$. Set $S_i = R_i \cup U$, so $|S_i| \leq \tau (\tau + 2)$.

    Consider first the hypothesis $h_0$. On $S_i$, it errs only on the retain part $R_i$. $U$ consists of copies of $T_0$, which is perfectly fitted by $h_0$ by construction. As $|R_i| \leq \tau$, $h_0$ can make at most $\tau$ mistakes on it. On the other hand, $h_i \neq h_0$ perfectly fits the retain part $R_i$. However, on each of the $\tau+1$ copies of $T_0$ in $U$, it must make at least one mistake, as $h_0$ is the only hypothesis that perfectly fits $U$. Therefore, $h_i$ makes at least $\tau + 1$ mistakes on $S_i$. Combining these two results, we obtain that $\learning(S_i) = h_0$ for any $i \in [N]$. The retrained targets are $\learning(S_i \setminus U) = \learning(R_i) = h_i$, again by the teaching set property.

    Therefore, the above implies that $N$ different datasets exactly collide under any ERM $\learning$, but are strictly separated under retraining.
    
\end{proof}

\subsection{Proofs of the Examples}

\begin{proof}[Proof of \Cref{ex:thresholds}]
\label{app:proof-thresholds}
For a realizable sample \(S\), write
\(n(S)\coloneqq \max\{x:(x,-1)\in S\}\) and
\(p(S)\coloneqq \min\{x:(x,+1)\in S\}\). Thus
\(\learning_c^{\mathrm{th}}(S)=\br{n(S)+p(S)}/2\).

We first prove the lower bound. Let
\(
    U = \cup_{i=1}^m \bc{
        (-c+1, +1)
    }
\)
be the deletion multiset.
Define the following multisets:
\[
S_1
=
U \cup 
\bc{
    (-c, -1), (-c+1, +1)
}
\qquad
S_2
=
U \cup
\bc{
    (-c, -1), (c, +1)
}.
\]
Pad each dataset with \(n-m-2\) additional copies of \((-c,-1)\).
Both full samples have largest negative example \(-c\) and smallest
positive example \(-c+1\), so
\(\learning_c^{\mathrm{th}}(S_1)=\learning_c^{\mathrm{th}}(S_2)=-c+\frac12\). 
After deleting \(U\),
however, \(\learning_c^{\mathrm{th}}(S_1\setminus U)=-c+\frac12\) and
\(\learning_c^{\mathrm{th}}(S_2\setminus U)=0\).
Hence, \((S_1,S_2,U)\) is a $(2, m, c-\frac12, |\cdot|;0,0)$-shared-deletion example.

It remains to show that no larger separation is possible. Consider any
shared-deletion example \((S_1,S_2,U)\) with \(|U|\le m\). Put
\(a_i=n(S_i)\), \(b_i=p(S_i)\), \(a_i'=n(S_i\setminus U)\), and
\(b_i'=p(S_i\setminus U)\) for \(i\in\{1,2\}\). Since
\(\learning_c^{\mathrm{th}}(S_1)=\learning_c^{\mathrm{th}}(S_2)\), we have \(a_1+b_1=a_2+b_2\).
Assume without loss of generality that
\(\learning_c^{\mathrm{th}}(S_2\setminus U)\ge \learning_c^{\mathrm{th}}(S_1\setminus U)\).
Then
\[
\learning_c^{\mathrm{th}}(S_2\setminus U)-\learning_c^{\mathrm{th}}(S_1\setminus U)
\le
\frac{c-b_2}{2}
+
\frac{a_1+c}{2}.
\]
If \(b_2'=b_2\), then the first term is unnecessary and the bound is at
most \((a_1+c)/2\le c-\frac12\), since \(a_1\le c-1\). Similarly, if
\(a_1'=a_1\), then the bound is at most
\((c-b_2)/2\le c-\frac12\), since \(b_2\ge -c+1\).

It remains to consider the case \(b_2'>b_2\) and \(a_1'<a_1\). Then
\(U\) must contain \((b_2,+1)\), so \((b_2,+1)\in S_1\) and hence
\(b_2\ge b_1\). Also \(U\) must contain \((a_1,-1)\), so
\((a_1,-1)\in S_2\) and hence \(a_1\le a_2\). Together with
\(a_1+b_1=a_2+b_2\), these inequalities imply \(a_1=a_2\) and
\(b_1=b_2\).
Thus, writing \(a=a_1=a_2\) and \(b=b_1=b_2\),
\[
\learning_c^{\mathrm{th}}(S_2\setminus U)-\learning_c^{\mathrm{th}}(S_1\setminus U)
\le
\frac{a+c}{2}+\frac{c-b}{2}
=
c-\frac{b-a}{2}
\le
c-\frac12,
\]
because \(a,b\in\mathcal X_c\) and \(a<b\). Thus the largest admissible
separation in a \((2,m,\Delta,\abs{\cdot};0,0)\)-s.d.e. is
\(\Delta=c-\frac12\).

\end{proof}

\begin{proof}[Proof of \Cref{ex:median}]
For each \(i\in\{1,2\}\), write the multiset \(S_i\) in sorted order as
\(x_1^{(i)}\le \cdots \le x_n^{(i)}\).

We first consider the even case \(n=2r\), with \(r\in\mathbb N\). Since
\((S_1,S_2,U)\) is a shared-deletion example, the original medians coincide;
write \(M=\learning_c^{\mathrm{med}}(S_i)
=(x_r^{(i)}+x_{r+1}^{(i)})/2\), so
\(x_r^{(i)}\le M\le x_{r+1}^{(i)}\), for \(i\in\{1,2\}\).

\begin{enumerate}
    \item Case \(m=1\). Write \(U=\bc{u}\). Suppose first that \(u<M\).
    Then deletion removes a point from the left half of each dataset, so
    \(\learning_c^{\mathrm{med}}\br{S_i\setminus U}=x_{r+1}^{(i)}\). Since
    \(x_{r+1}^{(i)}=2M-x_r^{(i)}\), \(x_r^{(i)}\ge -c\), and
    \(x_{r+1}^{(i)}\le c\), we have
    \(\learning_c^{\mathrm{med}}\br{S_i\setminus U}\in
    \bs{M,\min\bc{c,2M+c}}\), an interval of length at most \(c\). The case
    \(u>M\) is analogous and gives
    \(\learning_c^{\mathrm{med}}\br{S_i\setminus U}\in
    \bs{\max\bc{-c,2M-c},M}\), again an interval of length at most \(c\).
    Finally, if \(u=M\), then necessarily \(x_r^{(i)}=x_{r+1}^{(i)}=M\), so
    \(\learning_c^{\mathrm{med}}\br{S_i\setminus U}=M\). Hence, in all cases,
    \(\abs{\learning_c^{\mathrm{med}}(S_1\setminus U)
    -\learning_c^{\mathrm{med}}(S_2\setminus U)}\le c\).

    This bound is tight. Take \(S_1=\bc{-c,0,\ldots,0}\), with \(2r-1\)
    copies of \(0\), and \(S_2=\bc{-c,\ldots,-c,c,\ldots,c}\), with \(r\)
    copies of \(-c\) and \(r\) copies of \(c\). Both medians are \(0\).
    Removing the common multiset \(U=\bc{-c}\) gives post-deletion medians
    \(0\) and \(c\), so the separation is \(c\). Therefore the largest
    admissible separation in a \((2,1,\Delta,\abs{\cdot};0,0)\)-s.d.e.
    is \(\Delta=c\).

    \item Case \(m=2\). Write \(U=\bc{u,v}\), with \(u\le v\). Consider first
    \(u\le v<M\). Then both deleted points lie below the common median, so
    \(\learning_c^{\mathrm{med}}(S_i\setminus U)
    =(x_{r+1}^{(i)}+x_{r+2}^{(i)})/2
    =M+(x_{r+2}^{(i)}-x_r^{(i)})/2\). Since
    \(-c\le x_r^{(i)}\le x_{r+2}^{(i)}\le c\), we get
    \(\learning_c^{\mathrm{med}}(S_i\setminus U)\in
    \bs{M,\min\bc{c,M+c}}\), an interval of length at most \(c\). The case
    \(M<u\le v\) is analogous.

    It remains to consider \(u\le M\le v\). Write the remaining points as
    \(y_1^{(i)}\le\cdots\le y_{2r-2}^{(i)}\), so that
    \(\learning_c^{\mathrm{med}}(S_i\setminus U)
    =(y_{r-1}^{(i)}+y_r^{(i)})/2\). If \(u<M<v\), one point is deleted from
    each side of the median, so \(y_{r-1}^{(i)}\le M\le y_r^{(i)}\). If a
    deleted value equals \(M\), then \(M\) is a sample value, hence
    \(x_r^{(i)}=x_{r+1}^{(i)}=M\), and the same bracketing still holds.
    Therefore
    \(\learning_c^{\mathrm{med}}(S_i\setminus U)\in
    \bs{(M-c)/2,(M+c)/2}\), again an interval of length \(c\). Thus, in all
    cases,
    \(\abs{\learning_c^{\mathrm{med}}(S_1\setminus U)
    -\learning_c^{\mathrm{med}}(S_2\setminus U)}\le c\).

    The bound is tight. For \(r\ge2\), take
    \(S_1=\bc{-c,\ldots,-c,0,\ldots,0}\), with \(r-1\) copies of \(-c\) and
    \(r+1\) copies of \(0\), and \(S_2=\bc{0,\ldots,0,c,\ldots,c}\), with
    \(r+1\) copies of \(0\) and \(r-1\) copies of \(c\). Both medians are \(0\).
    Deleting \(U=\bc{0,0}\) gives post-deletion medians \(-c/2\) and \(c/2\),
    so the separation is \(c\). Hence the largest admissible separation
    in a \((2,2,\Delta,\abs{\cdot};0,0)\)-s.d.e. is \(\Delta=c\).

    \item Case \(3\le m\le n-1\). Consider again the datasets from the previous
    case, and take \(U=\bc{0,0,0}\). Then
    \(\learning_c^{\mathrm{med}}(S_1\setminus U)=-c\) and
    \(\learning_c^{\mathrm{med}}(S_2\setminus U)=c\), so the separation is
    \(2c\). This achieves the trivial upper bound \(2c\). Since an s.d.e.
    allows \(|U|\le m\), this construction is admissible for every
    \(m\ge3\). Hence the largest admissible separation in a
    \((2,m,\Delta,\abs{\cdot};0,0)\)-s.d.e. is \(\Delta=2c\) for all
    \(3\le m\le n-1\).
\end{enumerate}

Now consider the odd case \(n=2r-1\), with \(r\in\mathbb N\). For each
\(i\in\{1,2\}\), the median is
\(M=\learning_c^{\mathrm{med}}(S_i)=x_r^{(i)}\).

\begin{enumerate}
    \item Case \(m=1\). Write \(U=\bc{u}\). If \(u<M\), then
    \(\learning_c^{\mathrm{med}}(S_i\setminus U)
    =(x_r^{(i)}+x_{r+1}^{(i)})/2=(M+x_{r+1}^{(i)})/2\), and hence
    \(\learning_c^{\mathrm{med}}(S_i\setminus U)\in\bs{M,(M+c)/2}\).
    Similarly, if \(u>M\), then
    \(\learning_c^{\mathrm{med}}(S_i\setminus U)\in\bs{(M-c)/2,M}\). Finally,
    if \(u=M\), then
    \(\learning_c^{\mathrm{med}}(S_i\setminus U)
    =(x_{r-1}^{(i)}+x_{r+1}^{(i)})/2\in\bs{(M-c)/2,(M+c)/2}\). In all cases,
    the two post-deletion medians lie in a common interval of length at most
    \(c\), so their separation is at most \(c\).

    This bound is tight. For \(r\ge2\), take
    \(S_1=\bc{-c,\ldots,-c,c,\ldots,c}\), with \(r-1\) copies of \(-c\) and
    \(r\) copies of \(c\), and \(S_2=\bc{c,\ldots,c}\), with \(2r-1\) copies
    of \(c\). Both medians are \(c\). Removing the common multiset \(U=\bc{c}\)
    gives post-deletion medians \(0\) and \(c\), so the separation is \(c\).
    Therefore the largest admissible separation in a
    \((2,1,\Delta,\abs{\cdot};0,0)\)-s.d.e. is \(\Delta=c\).

    \item Case \(2\le m\le n-1\). Take
    \(S_1=\bc{-c,\ldots,-c,0,\ldots,0}\), with \(r-1\) copies of \(-c\) and
    \(r\) copies of \(0\), and \(S_2=\bc{0,\ldots,0,c,\ldots,c}\), with \(r\)
    copies of \(0\) and \(r-1\) copies of \(c\). Both medians are \(0\).
    Deleting \(U=\bc{0,0}\) gives post-deletion medians \(-c\) and \(c\), so
    the separation is \(2c\), which is the trivial upper bound. Since an
    s.d.e. allows \(|U|\le m\), this construction is admissible for
    every \(2\le m\le n-1\). Hence the largest admissible separation in a
    \((2,m,\Delta,\abs{\cdot};0,0)\)-s.d.e. is \(\Delta=2c\) for all such \(m\).
\end{enumerate}
\end{proof}

\begin{proof}[Proof of \Cref{ex:svm}]
\label{app:proof-svm}
We write the construction first for odd \(n\). Let \(q=(n-1)/2\), and define
\[
\begin{aligned}
U_\rho
&=
\{(\rho e_1,+1)\},\\
S_{1,\rho}
&=
U_\rho
\biguplus_{j=1}^{q}
\{(-\rho(e_1+e_2),-1),(\rho(e_1+e_2),+1)\},\\
S_{2,\rho}
&=
U_\rho
\biguplus_{j=1}^{q}
\{(-\rho(e_1-e_2),-1),(\rho(e_1-e_2),+1)\},
\end{aligned}
\]
where \(\uplus\) denotes multiset union. Thus each dataset contains one copy of
\((\rho e_1,+1)\), together with \(q\) copies of each of the two remaining
labeled points, and therefore has size \(1+2q=n\). 
For even \(n\), the same construction is obtained by taking
\(q=(n-2)/2\) and adding one extra duplicate of any nondeleted point; duplicates
do not change the hard-margin SVM constraints. 
All points in
\(S_{1,\rho}\cup S_{2,\rho}\) lie in the unit ball because
\(\norm{\rho e_1}_2=\rho\le1\) and
\(\norm{\rho(e_1\pm e_2)}_2=\rho\sqrt{2}\le1\).

We first compute the SVM on the full datasets. For \(S_{1,\rho}\), the
margin constraints are
\[
\ip{w}{\rho e_1}\ge1,
\qquad
\ip{w}{\rho(e_1+e_2)}\ge1,
\]
or equivalently \(w_1\ge1/\rho\) and \(w_1+w_2\ge1/\rho\). The first
inequality implies \(\norm{w}_2\ge |w_1|\ge1/\rho\), while
\(\rho^{-1}e_1\) satisfies both inequalities with norm \(1/\rho\). Thus
\(\learning_d^{\mathrm{svm}}(S_{1,\rho})=\rho^{-1}e_1\). The same
argument for \(S_{2,\rho}\), whose nonredundant constraints are
\(w_1\ge1/\rho\) and \(w_1-w_2\ge1/\rho\), gives
\(\learning_d^{\mathrm{svm}}(S_{2,\rho})=\rho^{-1}e_1\).

After deleting \(U_\rho\), the first retained sample consists of the
two labeled points \((-\rho(e_1+e_2),-1)\) and
\((\rho(e_1+e_2),+1)\). Its hard-margin SVM is the minimum-norm vector
\(w\) such that \(\ip{w}{\rho(e_1+e_2)}\ge1\). By Cauchy--Schwarz, the
unique minimizer is parallel to \(e_1+e_2\) and satisfies the constraint
with equality, hence
\[
\learning_d^{\mathrm{svm}}(S_{1,\rho}\setminus U_\rho)
=
\frac{1}{2\rho}(e_1+e_2).
\]
Similarly,
\[
\learning_d^{\mathrm{svm}}(S_{2,\rho}\setminus U_\rho)
=
\frac{1}{2\rho}(e_1-e_2).
\]
Therefore
\[
\norm{
\learning_d^{\mathrm{svm}}(S_{1,\rho}\setminus U_\rho)
-
\learning_d^{\mathrm{svm}}(S_{2,\rho}\setminus U_\rho)
}_2
=
\frac{1}{\rho}.
\]
Since \(\rho>0\) can be taken arbitrarily small, for every \(\Delta>0\)
the construction gives a \((2,1,\Delta,\norm{\cdot}_2;0,0)\)-s.d.e. by choosing \(\rho\) sufficiently small. Since \(|U_\rho|=1\le m\),
the same construction gives a \((2,m,\Delta,\norm{\cdot}_2;0,0)\)-s.d.e.
for every \(1\le m\le n-1\).
\end{proof}

\section{Deferred Proofs from~\Cref{sec:memorisation}}

\begin{proof}[Proof of \Cref{pp:information-lb}]
    Since $M \indep (W^K, U^K) \;|\; S$, using the data processing inequality in $(a)$, we have that,
    \begin{gather}
    \label{eq:memorization-main-eq}
    \begin{aligned}
        I(M;S)
        & \overset{(a)}{\geq}
        I(M; W^K, U^K)
        =
        I \left(
            M; (W_{\pi(1)}, U_{\pi(1)}),
                \dots (W_{\pi(K)}, U_{\pi(K)})
        \right) \\
        & \overset{(b)}{=}
        \sum_{i=1}^K
        I \left(
            M; W_{\pi(i)}, U_{\pi(i)} | W_{\pi(<i)}, U_{\pi(<i)}
        \right) \\
        & \overset{(c)}{=}
        \sum_{i=1}^K \left[
            I \left(
                M; U_{\pi(i)} |  W_{\pi(<i)}, U_{\pi(<i)}
            \right)
            +
            I \left(
                M; W_{\pi(i)} |  W_{\pi(<i)}, U_{\pi(<i)}, U_{\pi(i)}
            \right)
        \right] \\
        & \overset{(d)}{\geq}
        \sum_{i=1}^K
        I \left(
            M; W_{\pi(i)} |  W_{\pi(<i)}, U_{\pi(<i)}, U_{\pi(i)}
        \right) \\
        & \overset{(e)}{\geq}
        \sum_{i=1}^K
        I \left(
            \overline{W}_{\pi(i)}; W_{\pi(i)} |  W_{\pi(<i)}, U_{\pi(<i)}, U_{\pi(i)}
        \right),
    \end{aligned}
    \end{gather}
    where $(b)$ and $(c)$ follow from the chain rule of mutual information, while $(d)$ follows from its non-negativity; $(e)$ uses the data processing inequality again: since \(\overline{W}_{\pi(i)}=\unlearning(M,U_{\pi(i)})\) is generated from \((M,U_{\pi(i)})\), it holds that
    \(
    \overline{W}_{\pi(i)} \indep W_{\pi(i)} \mid
        (M,  U_{\pi(i)}, W_{\pi(<i)}, U_{\pi(< i)})
    \).
\end{proof}

\begin{proof}[Proof of~\Cref{lem:t-recovery-bound}]

    Fix $s$ and $u$ with $u\subseteq s$ and $|u|\le m$. Throughout, write
    $\prob_{s,u}(\cdot) \coloneqq \prob(\cdot\mid S=s,U=u)$ and
    $p(w\mid s,u) \coloneqq \prob_{s,u}(W=w)$ for the law of the retraining target.
    
    Since $(\learning,\unlearning)$ is $(\alpha,\varepsilon)$-RU, the laws of
    $\overline W$ and $W^\emptyset$ under $\mathbb P_{s,u}$ satisfy
    $D_\alpha(\overline W\,\|\,W^\emptyset)\le\varepsilon$. For fixed
    $w\in\mathcal W$, applying~\Cref{lem:renyi-event-transfer} to the event
    $A_w\coloneqq\{x\in\mathcal W: x\ne w\}$ gives
    \begin{equation}\label{eq:lem42-transfer}
        \mathbb P_{s,u}\big(\overline W\ne w\big)
        \le e^{\gamma\varepsilon}\,
        \mathbb P_{s,u}\big(W^\emptyset\ne w\big)^{\gamma}.
    \end{equation}
    
    Multiplying~\eqref{eq:lem42-transfer} by $p(w\mid s,u)$, summing over $w$, and
    using Jensen's inequality, since $x\mapsto x^\gamma$ is concave for
    $0<\gamma<1$, gives
    \begin{equation}\label{eq:lem42-jensen}
        \sum_{w\in\mathcal W}p(w\mid s,u)\,\mathbb P_{s,u}\big(\overline W\ne w\big)
        \le e^{\gamma\varepsilon}\Big(\sum_{w\in\mathcal W}p(w\mid s,u)\,
        \mathbb P_{s,u}\big(W^\emptyset\ne w\big)\Big)^{\gamma}.
    \end{equation}
    Since $\overline W \indep W\mid
    (S,U)$, the left-hand side equals $\mathbb P_{s,u}(\overline W\ne W)$.
    
    We now bound the term inside the parentheses.
    Let $W_1,W_2$ be independent copies of $W$ under $\mathbb P_{s,u}$, that is,
    independent runs of $\learning(s\setminus u)$, and let
    $W^\emptyset_2 \coloneqq \unlearning(W_2,\emptyset)$. Expanding over the law of the
    target from which $W^\emptyset$ is generated,
    \begin{gather}
    \begin{aligned}\label{eq:lem42-coupling}
        \sum_{w\in\mathcal W}p(w\mid s,u)\,\mathbb P_{s,u}\big(W^\emptyset\ne w\big)
        &=\sum_{w,w'\in\mathcal W}p(w\mid s,u)\,p(w'\mid s,u)\,
        \mathbb P\big(W^\emptyset\ne w\mid W=w'\big) \\
        &=\mathbb P_{s,u}\big(W^\emptyset_2\ne W_1\big).
    \end{aligned}
    \end{gather}
    
    Since $\{W^\emptyset_2\ne W_1\}\subseteq\{W^\emptyset_2\ne W_2\}\cup\{W_1\ne W_2\}$,
    \begin{equation}\label{eq:lem42-split}
        \mathbb P_{s,u}\big(W^\emptyset_2\ne W_1\big)
        \le\mathbb P_{s,u}\big(W^\emptyset_2\ne W_2\big)
        +\mathbb P_{s,u}\big(W_1\ne W_2\big).
    \end{equation}
    
    It remains to bound the matched term.
    Since $W^\emptyset$ depends on $(S,U)$ only through $W$, $(d_0,\Gamma)$-utility
    gives
    \begin{equation}\label{eq:lem42-matched}
        \mathbb P_{s,u}\big(W^\emptyset_2\ne W_2\big)
        =\sum_{w\in\mathcal W}p(w\mid s,u)\,
        \mathbb P\big(\unlearning(w,\emptyset)\ne w\big)\le\eta .
    \end{equation}
    
    Combining (\ref{eq:lem42-jensen})--(\ref{eq:lem42-matched}),
    \[
        \mathbb P_{s,u}\big(\overline W\ne W\big)
        \le e^{\gamma\varepsilon}\big(\eta+\mathbb P_{s,u}(W_1\ne W_2)\big)^{\gamma}
        \le e^{\gamma\varepsilon}\big(\eta+\Delta_{\learning}\big)^{\gamma}.
    \]
    Taking the supremum over $(s,u)$, and using $\beta_{\mathrm{un}}\le1$, gives
    $\beta_{\mathrm{un}}\le\bar\beta$. The two error sources are visible in the
    bound: $\eta$ accounts for the empty-request map moving the target,
    $\Delta_{\learning}$ for the randomness of the learner, and the factor
    $e^{\gamma\varepsilon}(\cdot)^{\gamma}$ for approximate unlearning. In
    particular, for deterministic $\learning$, $\Delta_{\learning}=0$ and
    $\beta_{\mathrm{un}}\le e^{\gamma\varepsilon}\eta^{\gamma}$.
\end{proof}

\begin{proof}[Proof of~\Cref{thm:information-lb-discrete}]
    To prove the memorization lower bound, we start from the result of~\Cref{pp:information-lb} and will bound the term,
    \[
        I \left(
            W_{\pi(i)}; \overline{W}_{\pi(i)} \mid
            W_{\pi(<i)}, U_{\pi(\le i)}
        \right).
    \]
    Assuming the model space \( \mathcal{W} \) is discrete and finite, we have that,
    \begin{gather}
    \begin{aligned}
        I \left(
            W_{\pi(i)}; \overline{W}_{\pi(i)} \mid
            W_{\pi(<i)}, U_{\pi(\le i)}
        \right)
        &=
        H \left(
            W_{\pi(i)} |  W_{\pi(<i)}, U_{\pi(\le i)}
        \right)
        -
        H \left(
            W_{\pi(i)} |  \overline{W}_{\pi(i)}, W_{\pi(<i)}, U_{\pi(\le i)}
        \right) \\
        &\overset{(a)}{\geq}
        H \left(
            W_{\pi(i)} |  W_{\pi(<i)}, U_{\pi(\le i)}
        \right)
        -
        H \left(
            W_{\pi(i)} |  \overline{W}_{\pi(i)}
        \right),
    \end{aligned}
    \end{gather}
    where $(a)$ uses that conditioning reduces entropy.

    By Fano's inequality applied to the pair
    \[
        \left(
            W_{\pi(i)},\,
            \overline{W}_{\pi(i)}
        \right),
    \]
    we get
    \[
        H\left(
            W_{\pi(i)}
            \mid
            \overline{W}_{\pi(i)}
        \right)
        \leq
        h(P_{e,\pi(i)})
        +
        P_{e,\pi(i)}\log(|\mathcal W|-1)
        =:
        F(P_{e,\pi(i)}) %
        ,
    \]
    where
    \[
        P_{e,\pi(i)}
        \coloneqq
        \prob{
            \overline{W}_{\pi(i)}\neq W_{\pi(i)}
        }.
    \]
    Moreover,
    \[
        P_{e,\pi(i)}
        =
        \expp_{S,U_{\pi(i)}}
        \left[
        \prob{
            \overline{W}_{\pi(i)}\neq W_{\pi(i)}
            \mid S,U_{\pi(i)}
        }
        \right]
        \leq
        \expp_{S,U_{\pi(i)}}[\beta_{\mathrm{un}}]
        =
        \beta_{\mathrm{un}}
        \leq
        \bar\beta .
    \]
    The function $F$ is increasing on $[0, 1 - \frac{1}{|\mathcal W|}]$, since
    $F'(p) = \log\frac{(1-p)(|\mathcal W|-1)}{p} \geq 0$ there, and it attains its maximum
    $\log|\mathcal W|$ at $p = 1 - \frac{1}{|\mathcal W|}$. Hence, with
    $\tilde\beta \coloneqq \min\{\bar\beta,\, 1 - \frac{1}{|\mathcal W|}\}$, we have
    $F(P_{e,\pi(i)}) \leq F(\tilde\beta)$: by monotonicity if $\bar\beta \leq 1 - \frac{1}{|\mathcal W|}$,
    and by maximality otherwise. Hence, 
    \[
        H\left(
            W_{\pi(i)}
            \mid
            \overline{W}_{\pi(i)}
        \right)
        \le
        h(\tilde\beta) + \tilde\beta\log(|\mathcal W|-1). %
    \]
    Therefore,
    \[
        I\left(
            W_{\pi(i)};\overline{W}_{\pi(i)}
            \mid
            W_{\pi(<i)},U_{\pi(\le i)}
        \right)
        \ge
        H\left(
            W_{\pi(i)}
            \mid
            W_{\pi(<i)},U_{\pi(\le i)}
        \right)
        -
        h(\tilde\beta) %
        -
        \tilde\beta\log(|\mathcal W|-1). %
    \]
    Summing over \(i=1,\ldots,K\) and using~\Cref{pp:information-lb} gives the claim.
    
\end{proof}

\subsection{Deferred Proofs from~\Cref{sec:examples-disc}}
\label{app:example-proofs}
\begin{proof}[Proof of~\Cref{pp:canonical-thresholds}]
    Partition \(\bs{N}\) into \(n\) consecutive bins
    \[
        B_j=\bc{(j-1)q+1,\ldots,jq},
        \qquad j\in\bs{n},
    \]
    where \(q=N/n\). Draw \(x_j\sim\mathrm{Unif}(B_j)\) independently, and set
    \[
        y_j=\dsI\bc{j\ge m}.
    \]
    Let
    \[
        S=((x_1,y_1),\ldots,(x_n,y_n)).
    \]
    Since the bins are ordered, every negative example lies to the left of
    every positive example, so \(S\) is realizable by a threshold. By the
    canonical tie-breaking rule, the ERM output on the full sample is
    \[
        \learning_{\mathrm{ERM}}(S)=h_{x_m}.
    \]

    For each \(i\in\bs{m}\), define the deletion request
    \[
        U_i
        =
        \bc{(x_j,0):1\le j\le m-i}
        \cup
        \bc{(x_j,1):m\le j\le m+i-1}.
    \]
    The first set has size \(m-i\), the second has size \(i\), and hence
    \(\abs{U_i}=m\). Since \(n\ge2m\), the point \(x_{m+i}\) exists. After deleting
    \(U_i\), the leftmost remaining positive example is \(x_{m+i}\). Therefore
    \[
        W_i
        \coloneqq
        \learning_{\mathrm{ERM}}(S\setminus U_i)
        =
        h_{x_{m+i}}.
    \]

    Let \(H_i=(W_{<i},U_{\le i})\). The variables in \(H_i\) reveal only points
    from bins other than \(B_{m+i}\). In particular, the previous retraining
    targets \(W_1,\ldots,W_{i-1}\) reveal
    \(x_{m+1},\ldots,x_{m+i-1}\), all of which are already deleted by \(U_i\).
    Since the \(x_j\)'s are independent across bins,
    \[
        x_{m+i}\mid H_i\sim\mathrm{Unif}(B_{m+i}).
    \]
    Distinct threshold locations define distinct hypotheses, so
    \[
        H(W_i\mid H_i)=\log q.
    \]

    Applying~\Cref{thm:information-lb-discrete} and substituting
    \(q\) for each \(i\in\bs{m}\), gives the result. Finally, for the
    bare ERM output, we have \(M\), and
    \(x_m\sim\mathrm{Unif}(B_m)\). Hence
    \[
        I(M;S)_{\mathrm{bare}}=H(M)=\log q.
    \]
\end{proof}

\begin{proof}[Proof of \Cref{prop:coordinate-pca}]
    Partition \(\bs{d}\) into \(n\) disjoint blocks
    \[
        B_j=\bc{(j-1)q+1,\ldots,jq},
        \qquad j\in\bs{n}.
    \]
    Let \(e_a\in\reals^d\) denote the \(a\)-th standard basis vector.
    Fix strictly decreasing positive numbers
    \[
        \lambda_1>\lambda_2>\cdots>\lambda_n>0.
    \]
    Draw \(a_j\sim\mathrm{Unif}(B_j)\) independently and set
    \[
        z_j=\sqrt{\lambda_j}\,e_{a_j}.
    \]
    Let \(S=(z_1,\ldots,z_n)\).

    The learner returns
    \[
        \learning_{\mathrm{PCA}}(D)
        =
        e_{\widehat a(D)},
        \qquad
        \widehat a(D)
        \in
        \argmax_{a\in\bs{d}}
        \sum_{z\in D}\ip{e_a}{z}^2,
    \]
    with an arbitrary fixed tie-breaking rule. On the full sample, because
    the \(\lambda_j\)'s are strictly decreasing,
    \[
        M_{\mathrm{bare}}
        =
        \learning_{\mathrm{PCA}}(S)
        =
        e_{a_1}.
    \]

    For \(i\in\bs{m}\), define
    \[
        U_i
        =
        \bc{z_1,\ldots,z_i}
        \cup
        \bc{z_{n-m+i+1},\ldots,z_n},
    \]
    where the second set is empty when \(i=m\). The first set has size \(i\)
    and the second has size \(m-i\), so \(\abs{U_i}=m\). The assumption
    \(2m\le n\) ensures that these deletions do not remove \(z_{i+1}\).

    After deleting \(U_i\), the largest remaining empirical second moment is
    \(\lambda_{i+1}\), attained uniquely at coordinate \(a_{i+1}\). Hence
    \[
        W_i
        \coloneqq
        \learning_{\mathrm{PCA}}(S\setminus U_i)
        =
        e_{a_{i+1}}.
    \]

    Let \(H_i=(W_{<i},U_{\le i})\). The variables in \(H_i\) reveal
    \(a_1,\ldots,a_i\), possibly some tail coordinates, and
    \(a_2,\ldots,a_i\) through the previous retraining targets. They reveal no
    information about \(a_{i+1}\). Since the blocks are independent,
    \[
        a_{i+1}\mid H_i\sim\mathrm{Unif}(B_{i+1}).
    \]
    Therefore
    \[
        H(W_i\mid H_i)=\log q.
    \]

    Applying the conditional-support Fano argument from
    \Cref{sec:examples-disc}, with \(q\) possible targets per request,
    gives the result. For the bare
    PCA output, \(M=e_{a_1}\), so
    \[
        I(M;S)_{\mathrm{bare}}=H(e_{a_1})=\log q.
    \]
\end{proof}

\begin{proof}[Proof of \Cref{prop:ridge-regression}]
    The main idea is that, for one-hot inputs, \(\ip{w}{e_a}=w_a\)
    for every \(a\in[d]\). Therefore, the ridge objective separates
    into independent one-dimensional problems. For a dataset \(D\),
    its unique minimizer satisfies
    \[\bs{\learning_{\mathrm{ridge}}(D)}_a=T_a(D)/(N_a(D)+\lambda),\]
    where \(N_a(D)\) is the number of examples with input \(e_a\),
    and \(T_a(D)\) is the sum of their labels. We construct datasets
    with equally many copies of \((e_a,+1)\) and \((e_a,-1)\), so
    all fitted coefficients are zero regardless of \(N_a(D)\).
    Deleting one copy of \((e_a,+1)\) changes the corresponding label
    sum to \(-1\), making the retrained coefficient depend on
    \(N_a(D)\). Accurate unlearning therefore requires retaining
    information about these counts. The remaining examples in the
    construction ensure that the total dataset size stays fixed
    despite this randomness.

    Draw \(C_1,\ldots,C_b\) independently and uniformly from
    \(\bc{1,2,3}\). Block \(j\) contains \(C_j\) copies of each of
    \((e_{2j-1},+1)\) and \((e_{2j-1},-1)\), and \(4-C_j\) copies
    of each of \((e_{2j},+1)\) and \((e_{2j},-1)\). Each block
    therefore has eight examples, giving \(n=8b\). Place one copy
    of \((e_{2j-1},+1)\) first in each block, order the remaining
    examples deterministically, and concatenate the blocks to obtain
    \(S=(z_1,\ldots,z_n)\). Since all label sums are zero,
    \(\learning_{\mathrm{ridge}}(S)=0\).

    For each \(i\in[b]\), let
    \(U_i=\bc{z_{8(i-1)+1}}=\bc{(e_{2i-1},+1)}\). Both its index
    and value are fixed, so this singleton request reveals nothing
    about \(C_i\). After deletion, there are \(2C_i-1\) examples
    with input \(e_{2i-1}\), with label sum \(-1\), while all other
    label sums remain zero. Thus
    \[
        W_i=\learning_{\mathrm{ridge}}(S\setminus U_i)
        =-\frac{1}{2C_i-1+\lambda}\,e_{2i-1}.
    \]
    The three values of \(C_i\) give three distinct retraining
    models it needs to match. Let \(\mathcal H_i=(W_{<i},U_{\le i})\).
    Previous targets reveal only \(C_1,\ldots,C_{i-1}\), and the
    requests are deterministic. Hence \(C_i\) is still uniformly
    random given \(\mathcal H_i\), so
    \(H(W_i\mid\mathcal H_i)=\log3\). For \(\beta_3<2/3\),
    applying the conditional-support Fano bound from
    \Cref{sec:examples-disc} and \Cref{pp:information-lb} gives
    \[
        I(M;S)
        \ge b\bigl[\log3-h(\beta_3)-\beta_3\log2\bigr]
        =\frac n8\bigl[\log3-h(\beta_3)-\beta_3\log2\bigr].
    \]
    For \(\beta_3=2/3\), the bound follows from nonnegativity.
    Finally, the bare fitted vector is identically zero on this
    distribution, so \(I(M;S)_{\mathrm{bare}}=0\).
\end{proof}

\begin{proof}[Proof of \Cref{prop:factorized-matrix-completion}]
    Set \(r=r_\star\). By definition of \(r_\star\), we have
    \(r^2\le d\) and \(r(m+1)\le n\). Since \(d>4\) and \(n>3m\), we have
    \(r\ge2\).

    We use the first \(r^2\) ambient coordinates as an \(r\times r\)
    matrix.  If \(n>r(m+1)\), we add \(n-r(m+1)\) deterministic dummy
    measurements of the form \((1,0_r,0)\). These impose no constraint
    and do not affect the fitted matrix.

    Draw a uniformly random permutation \(\sigma\) of \(\bs{r}\). For
    each row \(i\in\bs{r}\), include \(m\) indexed copies of the row-sum
    measurement
    \[
        c_i^{(a)}=(i,\mathbf 1_r,r),
        \qquad a\in\bs{m},
    \]
    and one coordinate measurement
    \[
        z_i=(i,e_{\sigma(i)},1).
    \]
    Thus the non-dummy part of the dataset has size \(r(m+1)\). The
    copies are treated as distinct indexed examples, so a deletion
    request may delete all \(m\) row-sum copies in a row.

    We first compute the fitted matrix on the full dataset. Because
    the data are consistent and \(\lambda\to0\), the learner selects,
    among interpolating matrices, one minimizing
    \[
        \sum_{i=1}^r \norm{x_i}_2.
    \]
    The rows decouple. Fix row \(i\). The constraints are
    \[
        \sum_{\ell=1}^r (x_i)_\ell=r,
        \qquad
        (x_i)_{\sigma(i)}=1.
    \]
    Therefore the remaining \(r-1\) coordinates must sum to \(r-1\). By
    Cauchy--Schwarz,
    \[
        \sum_{\ell\neq\sigma(i)} (x_i)_\ell^2
        \ge
        \frac{(r-1)^2}{r-1}
        =
        r-1,
    \]
    with equality iff all remaining coordinates are equal to \(1\). Hence the
    unique minimum-norm interpolating row is
    \[
        x_i=\mathbf 1_r.
    \]
    This holds for every row, so the full fitted matrix is deterministic:
    \[
        M_{\mathrm{bare}}
        =
        \learning_{\mathrm{fac}}(S)
        =
        \mathbf 1_r\mathbf 1_r^\top.
    \]

    For each row \(i\in\bs{r}\), define
    \[
        U_i=\bc{c_i^{(1)},\ldots,c_i^{(m)}}.
    \]
    This deletion request has size \(m\). After deleting \(U_i\), row \(i\)
    has only the coordinate constraint
    \[
        (x_i)_{\sigma(i)}=1.
    \]
    The unique minimum-\(\ell_2\)-norm row satisfying this constraint is
    \[
        x_i=e_{\sigma(i)}.
    \]
    All other rows still have both their row-sum and coordinate measurements,
    so they remain equal to \(\mathbf 1_r\). Thus
    \(W_i=\learning_{\mathrm{fac}}(S\setminus U_i)\) is the matrix whose
    \(i\)-th row is \(e_{\sigma(i)}\) and whose other rows are
    \(\mathbf 1_r\). In particular, \(W_i\) reveals exactly \(\sigma(i)\).

    Let $\mathcal H_i = (W_{<i}, U_{\le i})$. The deletion requests themselves are deterministic
    functions of the row index and reveal no information about $\sigma$. After observing
    $W_1, \dots, W_{i-1}$, one knows $\sigma(1), \dots, \sigma(i-1)$. Since $\sigma$ is a uniformly
    random permutation, conditional on this information, $\sigma(i)$ is uniform over the remaining
    $q_i \coloneqq r - i + 1$ columns. Therefore
    \[
        H(W_i \mid \mathcal H_i) = \log q_i .
    \]
    Applying the conditional-support Fano argument from \Cref{sec:examples-disc} with $q_i$ possible
    targets for request $i$, and noting that the last request ($q_r = 1$) has zero conditional entropy
    and zero penalty, we get, with $s = q_i$,
    \[
        I(M;S)
        \geq
        \log(r!) - \sum_{s=2}^{r} \bs{h(\beta_s) + \beta_s \log(s-1)},
    \]
    where we used $\sum_{s=2}^{r} \log s = \log(r!)$.
    Finally, $M$ is deterministic on this distribution, so
    $I(M;S)_{\mathrm{bare}} = 0$.
    
\end{proof}

\end{document}